%% file: main.tex
\documentclass{article}

\usepackage{microtype}
\usepackage{graphicx}
\usepackage{subcaption} 
\usepackage{booktabs} 
\usepackage{xcolor} 

\usepackage{multirow}

\definecolor{c1}{HTML}{15258E}
\definecolor{c2}{HTML}{000000}
\newcommand{\revadd}[1]{#1}

\usepackage[
    colorlinks,
]{hyperref}
\usepackage{float}
\usepackage{makecell}
\usepackage[normalem]{ulem}

\usepackage[accepted]{icml2026}

\usepackage{amsmath}
\usepackage{amssymb}
\usepackage{mathtools}
\usepackage{amsthm}
\usepackage{enumitem} 
\usepackage{algorithm}
\usepackage{algpseudocode}
\usepackage[most]{tcolorbox}
\usepackage[capitalize,noabbrev]{cleveref}
\newcommand{\myemph}[1]{\textit{\textbf{#1}}}

\theoremstyle{plain}
\newtheorem{theorem}{Theorem}[section]
\newtheorem{proposition}[theorem]{Proposition}

\theoremstyle{definition}
\newtheorem{definition}[theorem]{Definition}

\theoremstyle{remark}

\usepackage[textsize=tiny]{todonotes}

\begin{document}

\twocolumn[
\icmltitle{Optimizing Agentic Reasoning with Retrieval via Synthetic Semantic Information Gain Reward}





\begin{icmlauthorlist}
\icmlauthor{Senkang Hu}{jclab,cityu}
\icmlauthor{Yong Dai}{fdu}
\icmlauthor{Yuzhi Zhao}{hust}
\icmlauthor{Yihang Tao}{jclab,cityu}
\icmlauthor{Yu Guo}{jclab,cityu}
\icmlauthor{Zhengru Fang}{jclab,cityu}
\icmlauthor{Sam Kwong}{lnu}
\icmlauthor{Yuguang Fang}{jclab,cityu}
\end{icmlauthorlist}
\icmlaffiliation{jclab}{Hong Kong JC STEM Lab of Smart City.}
\icmlaffiliation{cityu}{City University of Hong Kong.}
\icmlaffiliation{lnu}{Lingnan University}
\icmlaffiliation{fdu}{Fudan University.}
\icmlaffiliation{hust}{Huazhong University of Science and Technology.}
\icmlcorrespondingauthor{Yuguang Fang}{my.Fang@cityu.edu.hk} 

\
\icmlkeywords{Machine Learning, ICML}

\vskip 0.3in
]



\printAffiliationsAndNotice{}



\begin{abstract}
    Agentic reasoning enables large reasoning models (LRMs) to dynamically acquire external knowledge, yet optimizing the retrieval process remains challenging due to the lack of dense, principled reward signals.
    In this paper, we introduce \textit{InfoReasoner}, a unified framework that incentivizes effective information seeking via a \textit{synthetic semantic information gain reward}. 
    Theoretically, we redefine information gain as uncertainty reduction over the model's belief states, establishing key properties including non-negativity, telescoping additivity, and channel monotonicity. 
    Practically, to enable scalable optimization without manual intermediate retrieval annotations, we instantiate this principle as a semantic information gain reward computed from the model's output distributions using \textit{semantic clustering via bidirectional textual entailment}.
    This training reward provides dense credit for retrieval steps while remaining anchored to final-answer correctness, enabling efficient training via Group Relative Policy Optimization (GRPO).
    Experiments on seven question-answering benchmarks, MATH500, and WebDetective show consistent gains over strong retrieval-augmented baselines, supporting our dense semantic information gain as a practical training signal for agentic retrieval.
\end{abstract}

\input{MySec/Intro.tex}

\input{MySec/problem_setup.tex}
\input{MySec/method.tex}

\input{MySec/experiments.tex}
\input{MySec/conclusion.tex}

\bibliography{ref, ref2}
\bibliographystyle{icml2026}

\newpage
\appendix
\onecolumn
\input{MySec/appendix.tex}

\end{document}

%% file: MySec/Intro.tex
\vspace{-2em}
\section{Introduction}
\label{sec:intro}

Large reasoning models (LRMs) have demonstrated remarkable capabilities in complex problem-solving by integrating extended reasoning chains with external knowledge retrieval \cite{guo2025deepseekr1nature, jinSearchR1TrainingLLMs2025, liSearcho1AgenticSearchEnhanced2025}. This \emph{agentic reasoning} paradigm enables models to dynamically retrieve relevant information during inference, combining the parametric knowledge of language models with the factual accuracy of external knowledge bases. However, optimizing such retrieval-augmented reasoning systems remains a fundamental challenge: \emph{how should we measure and reward the value of each retrieval action} to guide the agent toward more effective information gathering and reasoning?

Existing approaches to optimizing agentic reasoning with retrieval face several critical limitations. First, most methods rely on \emph{supervised fine-tuning} with human-annotated demonstrations \cite{liSearcho1AgenticSearchEnhanced2025, jinSearchR1TrainingLLMs2025}, which limits scalability and fails to capture the nuanced value of retrieval actions in open-ended reasoning scenarios. Second, reinforcement learning (RL) methods that optimize retrieval directly often employ \emph{task-specific rewards} (e.g., final answer correctness) \cite{xiongRAGGymSystematicOptimization2025, tanRAGR1IncentivizeSearch2025}, which suffer from signal sparsity, feedback delay, and inability to distinguish between retrieval actions that are equally distant from the final answer. Third, while some works attempt to incorporate intermediate reasoning signal quality \cite{zhang2025processvsoutcomereward}, they rely on heuristic process supervision and lack a formal probabilistic framework that mathematically guarantees local retrieval decisions contribute to the global reduction of epistemic uncertainty.

To address these limitations, we propose a principled approach that quantifies the intrinsic value of each retrieval step. The core insight underlying our approach is that \emph{effective retrieval should reduce an agent's uncertainty over the correct answer}. Intuitively, a good retrieval action is one that concentrates an agent's belief distribution over possible answers, while a poor action either fails to reduce uncertainty or, worse, increases confusion. This perspective naturally connects retrieval optimization to the well-established concept of \emph{information gain} from information theory and active learning. However, traditional information gain metrics require access to dense ground-truth annotations or oracle belief states, which are unavailable in practical agentic reasoning scenarios where an agent must learn from its own generated outputs.

In this paper, we propose \textit{InfoReasoner}, a unified framework that addresses these challenges by redefining information gain as \emph{uncertainty reduction over belief states} and designing a \emph{synthetic semantic information gain reward} that can be computed from the model's own output distributions without requiring manual intermediate retrieval annotations. Our theoretical contribution establishes information gain as a principled reward signal by proving key properties: \emph{non-negativity} (information gathering never hurts), \emph{telescoping additivity} (local gains accumulate to global uncertainty reduction), and \emph{monotonicity} (better information channels yield higher gains). These properties ensure that optimizing per-step information gain aligns with the long-term goal of reducing epistemic uncertainty about the final answer.

To make this framework practical, we introduce an {output-aware reward estimator} that computes information gain directly from semantic equivalence classes of the model's generated outputs. Specifically, we: 1) sample multiple answer sequences under different conditioning contexts (with and without retrieved evidence), 2) cluster semantically equivalent answers using bi-directional textual entailment, 3) estimate belief distributions over semantic classes, and 4) {compute a gold-class gain reward that measures whether retrieved evidence increases probability mass on the correct semantic class. This reward gives dense supervision to intermediate retrieval steps without requiring manual retrieval annotations, while the final-answer EM reward anchors training to task correctness.}

We optimize retrieval policies using this reward signal along with the output reward through Group Relative Policy Optimization (GRPO) \cite{shaoDeepSeekMathPushingLimits2024}, which efficiently estimates advantages without requiring explicit value functions. Our experiments on seven question-answering benchmarks, tool-integrated and long-horizon agentic settings demonstrate that InfoReasoner consistently improves reasoning accuracy over strong retrieval-augmented baselines, achieving up to 5.4\% average improvement while remaining computationally stable.

\textbf{Contributions.} Our main contributions are: 1) a theoretical framework that redefines information gain as uncertainty reduction over belief states, establishing formal guarantees for its use as a reward signal; 2) a practical, output-aware {training reward} for computing synthetic semantic information gain directly from model outputs without manual retrieval annotations; 3) empirical validation demonstrating consistent improvements across diverse reasoning benchmarks; and 4) a scalable optimization approach that enables efficient policy learning for agentic reasoning with retrieval.

%% file: MySec/problem_setup.tex

\section{Rethinking Information Gain as Uncertainty Reduction over Belief States}
\label{sec:preliminaries}
Optimizing intermediate retrieval steps is challenging due to the sparse and delayed nature of final answer rewards. To address this, before detailing our practical implementation in Section~\ref{sec:method}, we first establish the theoretical foundations of InfoReasoner. We model the reasoning process as a Partially Observable Markov Decision Process (POMDP) to formally define ``uncertainty'' in the context of agentic search. Crucially, this theoretical framework motivates our design choices in Section~\ref{sec:method}. Specifically, the abstract \textit{``belief state''} derived here directly maps to the \textit{``semantic cluster distribution''} we construct later, and the \textit{``uncertainty reduction''} properties proved here (Non-negativity, Telescoping Additivity) justify using our proposed intrinsic reward as a dense training signal.

First, we formulate agentic reasoning with retrieval as a POMDP:
\begin{equation}
\mathcal{M}=(\mathcal{S}, \mathcal{A}, \mathcal{O}, T, \Omega, \gamma),
\end{equation}
where $\mathcal{S}$ is the (latent) state space, which may not be observed directly. $\mathcal{A}$ is the action space (e.g., generating a reasoning step or issuing a retrieval query).
$\mathcal{O}$ is the observation space (e.g., retrieved documents or tool outputs).
$T(s' \mid s, a)$ is the state transition probability, which means the probability of transitioning from state $s$ to state $s'$ after taking action $a$.
$\Omega(o \mid s', a)$ is the observation probability, which means the probability of observing $o$ after taking action $a$ and transitioning to state $s'$.
$\gamma \in [0,1]$ is the discount factor.

\subsection{Bayesian Belief Update}

\begin{definition}[Belief State]
    \label{def:belief_state}
    Since the environment state $s \in \mathcal{S}$ may not be directly observable, the agent cannot know the true state with certainty. Therefore, we define a \emph{belief state} $b_t$ as the posterior probability distribution over a latent task variable $Y$:
    \begin{equation}
    \label{eq:belief_state}
    \vspace{-1mm}
    b_t(y) = P(Y=y \mid o_{\leq t}),
    \end{equation}
    where $Y$ is the latent task variable (e.g., the correct answer or its semantic equivalence class), $o_{\leq t}$ is the observation history (including retrieved documents or tool outputs), and $b_t$ is the belief state, which is a probability distribution reflecting the agent's uncertainty about the correct answer $Y$ after seeing all historical observations.

    \textbf{Remark:} Computing the exact belief state $b_t$ over the infinite space of natural language sequences is intractable. In Section~\ref{sec:method}, we approximate the belief state $b_t$ as a probability distribution over \emph{semantic equivalence classes} of generated answers, estimated via sampling and clustering.
\end{definition}

\begin{figure*}[t]
    \centering
    \includegraphics[width=1\textwidth]{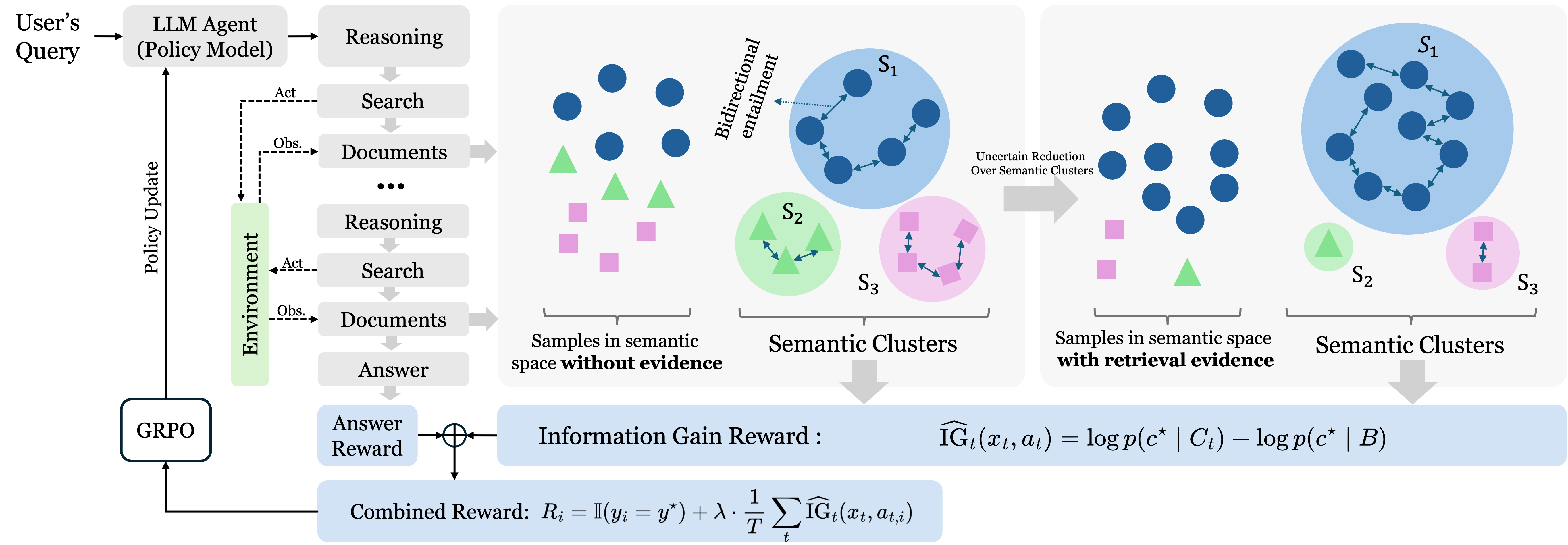}
    \vspace{-1em}
    \caption{Overview of \textbf{InfoReasoner}. The framework estimates the agent's \textit{belief state} by sampling candidate answers and grouping semantically equivalent ones. It then calculates an \textit{Information Gain} intrinsic reward by measuring the reduction in semantic uncertainty (entropy) when retrieved evidence is provided compared to a retrieval-free baseline, thereby incentivizing the agent to acquire uncertainty-resolving information.}
    \label{fig:framework}
    \vspace{-1em}
\end{figure*}

\begin{definition}[Bayes Belief Update]

In the context of LLM-based reasoning agents, the belief update must account for the fact that observations are generated through a two-stage process involving both environmental retrieval and LLM generation.
The agent takes action $a_t$ (e.g., retrieval), and the environment returns evidence $e_t \sim P(\cdot \mid Y=y, a_t)$. The LLM then generates an observation $O_t \sim P_{\mathrm{LLM}}(\cdot \mid e_t, b_t)$.
Assuming the LLM's interpretation of evidence dominates prior bias (i.e., $P_{\mathrm{LLM}}(O_t \mid e_t, b_t) \approx P_{\mathrm{LLM}}(O_t \mid e_t)$, see Appendix~\ref{sec:llm_belief_update_derivation} for details), the belief update reduces to:
\begin{equation}
b_{t+1}(y) = \frac{P(O_t \mid Y=y, a_t) \, b_t(y)}{\sum_{y^{\prime} \in \mathcal{Y}} P(O_t \mid Y=y^{\prime}, a_t) \, b_t(y^{\prime})}.
\label{eq:llm-belief-update-standard}
\end{equation}
This simplified version is used in our subsequent analysis for tractability.
\end{definition}

\subsection{Uncertainty Functional}
To quantify an agent's epistemic uncertainty about the latent answer $Y$, we introduce an uncertainty functional $\mathcal{U}$.

\begin{definition}[Uncertainty Functional]
\label{def:uncertainty-functional}
We define an \emph{uncertainty functional} $\mathcal{U}$ as a mapping
\begin{equation}
\mathcal{U}: \mathcal{B} \to \mathbb{R}_{\ge 0}, \qquad b \mapsto \mathcal{U}(b),
\end{equation}
where \(\mathcal{B}\) denotes the space of all possible belief states, that is, the set of all probability distributions \(b\) over the latent variable \(Y\) with support \(\mathcal{Y}\), so that each \(b(y) = P(Y = y \mid \text{history})\). The mapping \(\mathcal{U}\) is a functional that assigns to each belief state \(b \in \mathcal{B}\) a non-negative real value \(\mathcal{U}(b)\), which quantifies the uncertainty associated with that belief. The notation \(b \mapsto \mathcal{U}(b)\) simply emphasizes that the input is a probability distribution \(b\) and the output is its corresponding uncertainty value.
The functional $\mathcal{U}$ satisfies the following axioms:
\begin{enumerate}[itemsep=0pt, parsep=0pt]
    \vspace{-3mm}
  \item \textit{Minimality:} If $b$ is degenerate (assigns probability $1$ to some $y$), then $\mathcal{U}(b)=0$.
  \item \textit{Concavity:} For any $b_1,b_2 \in \mathcal{B}$ and $\lambda \in [0,1]$, 
  \begin{equation}
  \mathcal{U}(\lambda b_1+(1-\lambda)b_2) \ge \lambda \,\mathcal{U}(b_1) + (1-\lambda)\,\mathcal{U}(b_2).
  \end{equation}

  \item \textit{Expected Monotonicity:} For any belief $b$ and action $a$, 
  \begin{equation}
  \mathbb{E}_{O \sim P(\cdot\mid b,a)}\!\big[\,\mathcal{U}(b_{t+1})\,\big] \le \mathcal{U}(b_t).
  \label{eq:expected-monotonicity}
  \end{equation}
\end{enumerate}
These axioms state that $\mathcal{U}$ quantifies epistemic uncertainty: it is zero under certainty, concave under mixture, and cannot increase in expectation after incorporating new information. A typical example is the Shannon entropy: \(\mathcal U_H(b) \triangleq -\sum_{y\in\mathcal Y} b(y)\log b(y).\)

\textbf{Remark:} In Section~\ref{sec:method}, we instantiate $\mathcal{U}$ using the \emph{semantic entropy} of the belief distribution over semantic classes.
\end{definition}

\subsection{Information Gain as Uncertainty Reduction}
\label{subsec:ig-as-ur}

Having defined the uncertainty functional $\mathcal{U}$ to quantify an agent's epistemic uncertainty about the latent answer $Y$, we can now formalize the concept of information gain. Intuitively, information gain represents the reduction in this uncertainty resulting from the acquisition of new information. This leads to the following definitions.

\begin{definition}[One-step Information Gain]
\label{def:one-step-ig}
Given $\mathcal{U}$, the \emph{realized information gain} at time $t$ is
\begin{equation}
\label{eq:ig-definition}
\mathrm{IG}_t = \mathcal{U}(b_t) - \mathcal{U}(b_{t+1}),
\end{equation}
where $b_{t+1}$ is the posterior belief obtained from $b_t$ after taking action $a_t$ and observing $O_t$ via the Bayes update.
In other words, $\mathrm{IG}_t$ quantifies how much the agent's uncertainty has decreased, or its certainty has increased, after taking such action when observing the system is at state $O_t$.

    \textbf{Remark:} {This theoretical quantity motivates the reward design in Section~\ref{sec:method}. In practice, we distinguish the entropy-based semantic information gain from the gold-class gain reward actually optimized during training.}
\end{definition}

\begin{definition}[Expected Information Gain]
The \emph{expected information gain} of action $a$ under belief $b$ is
\begin{equation}
\label{eq:eig}
\mathrm{EIG}(a \mid b) 
= \mathbb{E}_{O \sim P(\cdot \mid b,a)} \Big[ \mathcal{U}(b) - \mathcal{U}(b_{t+1}) \Big].
\end{equation}
This value represents the average reduction in uncertainty that an agent \textit{expects} to achieve by taking action $a$. It serves as a crucial forward-looking metric for decision-making, allowing the agent to choose actions that are most likely to resolve its uncertainty about the final answer.
\end{definition}

\begin{proposition}[Non-negativity under Ideal Updates]
    \label{prop:non-negativity}
    If $U$ satisfies the Expected Non-Increase axiom described in Eq.~\eqref{eq:expected-monotonicity}, then for any belief $b$ and action $a$, under the assumption of consistent Bayesian belief updates, the Expected Information Gain is non-negative:
\begin{equation}
\mathrm{EIG}(a\mid b) \;\ge\; 0,
\end{equation}
with equality iff $O \perp Y \mid (b,a)$. This theoretical lower bound serves as the optimality condition for our agentic reasoning policy.
\end{proposition}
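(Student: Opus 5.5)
The inequality follows directly from the Expected Monotonicity axiom. Since $\mathcal{U}(b)$ does not depend on $O$, linearity of expectation turns \eqref{eq:eig} into
\begin{equation*}
\mathrm{EIG}(a\mid b) = \mathcal{U}(b) - \mathbb{E}_{O\sim P(\cdot\mid b,a)}\big[\mathcal{U}(b_{t+1})\big],
\end{equation*}
and \eqref{eq:expected-monotonicity} states exactly that the subtracted term is at most $\mathcal{U}(b)$. This gives $\mathrm{EIG}(a\mid b)\ge 0$. The only point to check is that $b_{t+1}$ in the axiom is the same posterior as in \eqref{eq:eig}, namely the Bayes update \eqref{eq:llm-belief-update-standard}. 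This is precisely the ``consistent Bayesian updates'' hypothesis.

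For the equality case I will use the martingale property of Bayesian posteriors together with the concavity axiom. Write $P(O\mid b,a)=\sum_y P(O\mid y,a)\,b(y)$. Then \eqref{eq:llm-belief-update-standard} gives
\begin{equation*}
\mathbb{E}_{O}\big[b_{t+1}(y)\big] = \sum_O P(O\mid b,a)\,\frac{P(O\mid y,a)\,b(y)}{P(O\mid b,a)} = b(y),
\end{equation*}
so $b$ is a convex combination of the posteriors $\{b_{t+1}^{O}\}$ with weights $P(O\mid b,a)$. Jensen's inequality for a concave $\mathcal{U}$ yields $\mathcal{U}(b)\ge \mathbb{E}_O[\mathcal{U}(b_{t+1})]$, which re-derives non-negativity from concavity alone.

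The ``if'' direction is straightforward. If $O\perp Y\mid(b,a)$, then $P(O\mid y,a)$ does not depend on $y$ on the support of $b$. The update then returns $b_{t+1}=b$ for every $O$, so $\mathrm{EIG}=0$.

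The ``only if'' direction is the main obstacle. From $\mathrm{EIG}=0$ we need all posteriors with positive probability to equal $b$, and then conditional independence follows. Jensen's inequality is tight only when $\mathcal{U}$ is affine on the convex hull of the posteriors, and mere concavity does not exclude this. Axioms 1--3 also do not force it: the constant-zero functional satisfies all three and gives $\mathrm{EIG}\equiv 0$. I would therefore add an explicit strict concavity assumption on $\mathcal{U}$ and note that the Shannon entropy $\mathcal{U}_H$ satisfies it. Under strict concavity, equality in Jensen forces $b_{t+1}^{O}=b$ for $P(\cdot\mid b,a)$-almost every $O$. Equivalently, $P(O\mid y,a)=P(O\mid b,a)$ for all $y$ in the support of $b$, which is $O\perp Y\mid(b,a)$. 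For $\mathcal{U}_H$ this reduces to the familiar identity $\mathrm{EIG}=I(Y;O\mid b,a)$, which vanishes iff $O\perp Y$. I would state it as a corollary to make the characterization concrete.
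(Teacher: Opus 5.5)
Your proof of $\mathrm{EIG}(a\mid b)\ge 0$ is the paper's proof: subtract the Expected Monotonicity inequality from $\mathcal{U}(b)$. The two arguments differ on the equality clause. The paper asserts, without argument, that $\mathrm{EIG}=0$ holds iff the posterior equals the prior almost surely, and then identifies that condition with $O\perp Y\mid(b,a)$. The only case where this characterization is actually justified is Shannon entropy, where $\mathrm{EIG}=I(Y;O\mid a,b)$; there the paper invokes mutual information for non-negativity, and the equality case follows from the standard fact that mutual information vanishes iff $O\perp Y$.

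You correctly observe that the first ``iff'' does not follow from the stated axioms. The constant-zero functional satisfies minimality, concavity and expected monotonicity, yet it gives $\mathrm{EIG}\equiv 0$ even for informative channels. So the ``only if'' direction of the proposition is not provable as stated. Your repair is the right one: assume strict concavity, use the martingale identity $\mathbb{E}_O[b_{t+1}]=b$, and apply the equality case of Jensen. This contains the Shannon argument as a special case.

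Your martingale step also shows something the paper notes only for entropy. Under Bayes-consistent updates, concavity alone implies Expected Monotonicity, so the third axiom is redundant rather than an independent assumption.

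One technical remark. For finitely many observations, the Jensen equality case under strict concavity follows by induction on the mixture. For a continuum of observations, you can instead use a supergradient of $\mathcal{U}$ at $b$ within the face spanned by $\mathrm{supp}(b)$. All posteriors lie in that face, and $b$ is in its relative interior, so strict concavity makes the supergradient inequality strict away from $b$.
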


\textbf{Remark (Theoretical vs. Realized Gain):} It is important to distinguish between the \textit{theoretical} expected gain (which is non-negative for a rational agent) and the \textit{realized} gain $\mathrm{IG}_t$ observed during training. In practice, LLM agents are not perfect Bayesian updaters. A retrieval action returning misleading or ``poisoned'' context can increase the entropy of the model's belief state, resulting in a \textit{negative realized information gain} ($\mathrm{IG}_t < 0$). Crucially, within our RL framework, this is not a failure mode but a \textit{desirable penalty signal}. A negative reward discourages the policy from executing retrieval actions that confuse the model or retrieving documents that conflict with the reasoning chain, effectively aligning the agent's behavior with the theoretical optimality derived in Proposition~\ref{prop:non-negativity}. The proof is given in Appendix~\ref{prop:non-negativity-proof}.

\begin{proposition}[Telescoping Additivity]
    \label{prop:telescoping-additivity}
Along any belief trajectory $b_0 \to b_1 \to \cdots \to b_T$ generated by the agent,
\begin{equation}
\sum_{t=0}^{T-1} \mathrm{IG}_t \;=\; \mathcal{U}(b_0) - \mathcal{U}(b_T).
\end{equation}
Thus, local information gains telescope to the global reduction in uncertainty.
\end{proposition}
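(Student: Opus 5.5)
The plan is to argue directly from Definition~\ref{def:one-step-ig}, since the statement is purely algebraic once the realized gains are written out. Along a fixed trajectory $b_0 \to b_1 \to \cdots \to b_T$, each belief $b_{t+1}$ is the Bayes update of $b_t$ after action $a_t$ and observation $O_t$, as in Eq.~\eqref{eq:llm-belief-update-standard}. For each $t \in \{0,\dots,T-1\}$ we have $\mathrm{IG}_t = \mathcal{U}(b_t) - \mathcal{U}(b_{t+1})$, which is a finite real number because $\mathcal{U}$ maps into $\mathbb{R}_{\ge 0}$ (Definition~\ref{def:uncertainty-functional}).

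The key step is to sum these identities and reindex:
\begin{equation*}
\sum_{t=0}^{T-1} \mathrm{IG}_t = \sum_{t=0}^{T-1} \mathcal{U}(b_t) - \sum_{t=1}^{T} \mathcal{U}(b_{t}).
\end{equation*}
The two sums share the terms $\mathcal{U}(b_1),\dots,\mathcal{U}(b_{T-1})$, which cancel. Only $\mathcal{U}(b_0) - \mathcal{U}(b_T)$ remains. For rigor I would write this as an induction on $T$: the case $T=0$ gives an empty sum equal to $\mathcal{U}(b_0)-\mathcal{U}(b_0)=0$, and the inductive step adds $\mathcal{U}(b_T)-\mathcal{U}(b_{T+1})$ to both sides.

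There is no real obstacle here. The only point to stress is that the identity is pathwise: it holds for every realized trajectory and needs none of the concavity or expected-monotonicity axioms, nor exactness of the Bayesian update. This is what licenses using realized, possibly negative, per-step gains as dense rewards. If useful, I would also note the corollary obtained by taking expectations over trajectories: the expected total gain equals $\mathcal{U}(b_0) - \mathbb{E}[\mathcal{U}(b_T)]$, and this is at most $\mathcal{U}(b_0)$ because $\mathcal{U} \ge 0$.
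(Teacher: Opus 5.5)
Your proposal is correct and follows essentially the same route as the paper: substitute $\mathrm{IG}_t = \mathcal{U}(b_t)-\mathcal{U}(b_{t+1})$ from Definition~\ref{def:one-step-ig}, sum over $t$, and cancel the intermediate terms $\mathcal{U}(b_1),\dots,\mathcal{U}(b_{T-1})$. Your induction formalization and your remark that the identity holds pathwise, independently of the axioms and of exact Bayesian updating, are sound additions that the paper does not spell out.
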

Telescoping additivity is a crucial property that bridges local, step-wise rewards with the global task objective. {It demonstrates that, under the ideal belief-update formulation, local entropy-based gains accumulate to global uncertainty reduction. This property motivates step-level retrieval credit assignment, while the practical reward in Section~\ref{sec:method} uses a supervised gold-class surrogate to improve alignment with final-answer correctness.} The proof is given in Appendix~\ref{prop:telescoping-additivity-proof}.
    
\begin{proposition}[Monotonicity w.r.t. Information Channels]
    \label{prop:monotonicity-wrt-information-channels}
If action $a_1$ induces an observation channel that Blackwell-dominates $a_2$, then
\begin{equation}
\mathrm{EIG}(a_1 \mid b) \;\ge\; \mathrm{EIG}(a_2 \mid b), \quad \forall \; b.
\end{equation}
\end{proposition}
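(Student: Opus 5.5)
The plan is to reduce the statement to the classical Blackwell theorem, using only the concavity axiom of $\mathcal{U}$ from Definition~\ref{def:uncertainty-functional}.

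\textbf{Step 1: rewrite EIG as a posterior average.} Since $\mathcal{U}(b)$ does not depend on the observation,
\[
\mathrm{EIG}(a\mid b)=\mathcal{U}(b)-\mathbb{E}_{O\sim P(\cdot\mid b,a)}\big[\mathcal{U}(b^{a}_{O})\big],
\]
where $b^{a}_{O}$ is the Bayes posterior of Eq.~\eqref{eq:llm-belief-update-standard}. It therefore suffices to show
\[
\mathbb{E}_{O_1}\big[\mathcal{U}(b^{a_1}_{O_1})\big]\le \mathbb{E}_{O_2}\big[\mathcal{U}(b^{a_2}_{O_2})\big].
\]

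\textbf{Step 2: fix the notion of dominance.} I will work with finite (or countable) $\mathcal{Y}$ and observation spaces. I take Blackwell dominance to mean garbling: there is a Markov kernel $G(o_2\mid o_1)$, independent of $y$, with
\[
P(o_2\mid y,a_2)=\sum_{o_1}G(o_2\mid o_1)\,P(o_1\mid y,a_1)\quad\text{for all } y.
\]
Realize both experiments on one probability space via the chain $Y\to O_1\to O_2$: draw $Y\sim b$, then $O_1$ from channel $a_1$, then $O_2\sim G(\cdot\mid O_1)$. The marginal law of $(Y,O_2)$ is exactly that of experiment $a_2$, so the posterior $b^{a_2}_{o_2}$ is unchanged.

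\textbf{Step 3: the key identity.} Because $O_2$ depends on $Y$ only through $O_1$, the law of total probability gives
\[
b^{a_2}_{o_2}(y)=P(Y=y\mid O_2=o_2)=\sum_{o_1}P(O_1=o_1\mid O_2=o_2)\,b^{a_1}_{o_1}(y).
\]
This uses the conditional independence $P(y\mid o_1,o_2)=P(y\mid o_1)$. So each coarse posterior is a convex combination of fine posteriors. Apply concavity (Jensen's inequality, extended from two points to finite or countable mixtures) to get
\[
\mathcal{U}(b^{a_2}_{o_2})\ge \sum_{o_1}P(o_1\mid o_2)\,\mathcal{U}(b^{a_1}_{o_1}).
\]
Taking the expectation over $O_2$ and using the tower property yields
\[
\mathbb{E}\big[\mathcal{U}(b^{a_2}_{O_2})\big]\ge \mathbb{E}\big[\mathcal{U}(b^{a_1}_{O_1})\big],
\]
which is the inequality required in Step 1.

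\textbf{Main obstacle.} The technical point is extending two-point concavity to infinite mixtures when the observation space is countable or continuous. For finite mixtures, induction on the number of points suffices. Beyond that, I would either assume finite spaces, matching the finite semantic-cluster setting later in the paper, or add a mild continuity assumption on $\mathcal{U}$ (e.g., lower semicontinuity, which holds for Shannon entropy on a finite simplex) to pass to limits. If the paper instead defines dominance via the ordering of posterior distributions, the two definitions agree by Blackwell's theorem. In that case Step 3 is replaced by the statement that the distribution of $a_1$-posteriors is a mean-preserving spread of the distribution of $a_2$-posteriors, and concavity finishes the argument in the same way.
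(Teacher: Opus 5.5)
Your proof is correct, but it takes a different route from the paper's.

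The paper also starts from the garbling kernel $P(O_2\mid O_1)$. It then restricts to Shannon entropy, identifies $\mathrm{EIG}(a\mid b)$ with the mutual information $I(Y;O\mid a,b)$, and concludes by the data processing inequality along $Y\to O_1\to O_2$. For a general $\mathcal{U}$ it only asserts, without argument, that monotonicity persists because everything observable through $a_2$ is already contained in $a_1$.

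You instead write each $a_2$-posterior as a mixture of $a_1$-posteriors and use only the concavity axiom of Definition~\ref{def:uncertainty-functional}, which is the classical Blackwell argument. This gives a complete proof for every uncertainty functional the proposition quantifies over. Specialized to entropy, it also reproves the data processing inequality, since there $\mathbb{E}[\mathcal{U}(b^{a}_{O})]=H(Y\mid O)$. The paper's argument is shorter, but it is rigorous only for the entropy case.

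Your stated obstacle is also milder than you suggest. Because $\mathcal{U}\ge 0$, take a countable mixture $b=\sum_i\lambda_i b_i$ and split it into its first $n$ components and the renormalized tail. Applying finite concavity and discarding the nonnegative tail term gives $\mathcal{U}(b)\ge\sum_{i\le n}\lambda_i\,\mathcal{U}(b_i)$ for every $n$. So countable Jensen holds with no continuity assumption. For finite $\mathcal{Y}$ and continuous observations, a supergradient of $\mathcal{U}$ at the barycenter, taken within the smallest face of the simplex containing it, gives the integral form.
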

\vspace{-1em}
This proposition ensures that our framework behaves rationally when comparing different information-gathering actions. It guarantees that an action leading to a more informative outcome (e.g., querying a more reliable knowledge base, using a more precise tool) will be assigned an equal or higher EIG value. This is essential for decision-making, as it directs an agent to systematically prefer higher-quality information sources, thereby optimizing its retrieval and reasoning strategy. The proof is given in Appendix~\ref{prop:monotonicity-wrt-information-channels-proof}.


\subsection{Interpretation}

This framework establishes that \emph{information gain equals uncertainty reduction} of a generalized functional $\mathcal{U}$. The telescoping property (Proposition~\ref{prop:telescoping-additivity}) explains why local uncertainty reduction is a meaningful retrieval objective under ideal belief updates, while monotonicity (Proposition~\ref{prop:monotonicity-wrt-information-channels}) gives a rational preference ordering over information sources. Section~\ref{sec:method} then turns this principle into a tractable semantic information gain reward that preserves the before and after retrieval contrast while anchoring the signal to correctness.

%% file: MySec/method.tex
\section{Method}
\label{sec:method}


To operationalize the theoretical framework in Section~\ref{sec:preliminaries} into a tractable algorithm, we instantiate the abstract belief state $b_t$ as a probability distribution over \emph{semantic equivalence classes} of sampled answers, and instantiate the uncertainty functional $\mathcal{U}$ as the \emph{semantic entropy} of this distribution. We then compute $\mathrm{IG}_t$ as the reduction in semantic entropy after incorporating retrieved evidence, and use it as an intrinsic reward signal. The remainder of this section details our semantic clustering procedure for belief estimation and the resulting information-gain reward computation. The overall framework is illustrated in Fig. \ref{fig:framework}.

\subsection{Semantic Clustering via Bidirectional Textual Entailment}

A key innovation of our framework is the estimation of semantic uncertainty through clustering model outputs into equivalence classes. Traditional approaches using token overlap or embedding similarity fail to capture semantic equivalence across diverse answer formulations. Consider the answers ``Einstein'' and ``He is Albert Einstein''; these are semantically identical but syntactically distinct.

To overcome these challenges, inspired by textual entailment \cite{10.5555/1892211.1892215}, we introduce a robust semantic clustering algorithm based on bidirectional textual entailment.
At step $t$, the policy LLM $\pi_\theta$ receives problem statement $x_t$ and belief context from $b_t$.
It emits a retrieval action $a_t$ (query/search), the retriever returns retrieved evidence $e_t$, and the model produces observation $O_t$.
We compare two conditioning contexts for uncertainty:
1) $Z=B(x_t)$: a fact-free paraphrase of $x_t$ that preserves only task framing (e.g., query and system prompt).
2) $Z=C_t(a_t) = x_t \oplus e_t$, where $x_t$ is concatenated with retrieved evidence $e_t$ via the retrieval action $a_t$.
Both are fed to the same $\pi_\theta$. Then, we sample $M$ sequences $\{s^{(Z)}_1,\ldots,s^{(Z)}_M\}\sim \pi_\theta(\cdot\mid x_t,Z)$ from the policy for context $Z\in\{B(x_t), C_t(a_t)\}$. We define semantic equivalence as follows.

\begin{definition}[Semantic Equivalence via Bidirectional Entailment]
    Two answer sequences $s_i$ and $s_j$ belong to the same semantic class if and only if they mutually entail each other in the context of question $x_t$:
\begin{equation}
    \begin{aligned}
    &s_i \leftrightarrow s_j \iff \\
    &P_{\text{NLI}}(s_i \vDash s_j \mid x_t) > \tau \text{ and } P_{\text{NLI}}(s_j \vDash s_i \mid x_t) > \tau
    \end{aligned}
\end{equation}
where $\vDash$ denotes the entailment relation (i.e., $s_i \vDash s_j$ means $s_i$ logically entails $s_j$), $P_{\text{NLI}}$ is a pretrained natural language inference (NLI) model that outputs the probability of entailment between two text sequences given a context, and $\tau$ is a confidence threshold.
\end{definition}
This definition captures the intuitive notion that two answers are equivalent if each logically implies the other given the question context. 
We partition sequences into semantic classes $\mathcal{C}$ by constructing an undirected graph (sequences as nodes, edges for bidirectional entailment) and extracting connected components. This discrete distribution over $\mathcal{C}$ approximates the belief state $b_t$ in Eq.~\eqref{eq:belief_state}. The details are given in Algorithm~\ref{alg:reward_computation}.

\begin{table*}[t]
    \centering
    \caption{Accuracy comparison of our method versus baseline methods with Qwen2.5-3B across various QA benchmarks. Bold denotes best results, and underline denotes second best results.}
    \label{tab:rq1}
    \vspace{-2mm}
    \resizebox{\linewidth}{!}{
    \begin{tabular}{l|ccccccc|c}
    \toprule
    \multirow{2.5}{*}{Methods}& \multicolumn{3}{c}{Single-Hop QA} & \multicolumn{4}{c|}{Multi-Hop QA} & \multirow{2.5}{*}{Avg.} \\
    \cmidrule(lr){2-4} \cmidrule(lr){5-8}
     & NQ & TriviaQA & PopQA & HotpotQA & 2Wiki & MuSiQue & Bamboogle & \\
    \midrule
    \textit{w/o Retrieval} & & & & & & & & \\
    Direct Generation & 0.106 & 0.288 & 0.108 & 0.149 & 0.244 & 0.020 & 0.024 & 0.134 \\
    SFT & 0.249 & 0.292 & 0.104 & 0.186 & 0.248 & 0.044 & 0.112 & 0.176 \\
    R1-Instruct~\cite{guo2025deepseekr1nature} & 0.210 & 0.449 & 0.171 & 0.208 & 0.275 & 0.060 & 0.192 & 0.224 \\
    R1-Base~\cite{guo2025deepseekr1nature} & 0.226 & 0.455 & 0.173 & 0.201 & 0.268 & 0.055 & 0.224 & 0.229 \\
    CoT & 0.048 & 0.185 & 0.054 & 0.092 & 0.111 & 0.022 & 0.232 & 0.106 \\
    \midrule
    \textit{w/ Retrieval (3B)} & & & & & & & & \\
    RAG~\cite{10.5555/3495724.3496517} & 0.348 & 0.544 & {0.387} & 0.255 & 0.226 & 0.047 & 0.080 & 0.270 \\
    Search-o1~\cite{liSearcho1AgenticSearchEnhanced2025} & 0.238 & 0.472 & 0.262 & 0.221 & 0.218 & 0.054 & \textbf{0.320} & 0.255 \\
    IRCoT~\cite{trivedi-etal-2023-interleaving} & 0.111 & 0.312 & 0.200 & 0.164 & 0.171 & \underline{0.067} & 0.240 & 0.181 \\
    Search-R1-3B-Base~\cite{jinSearchR1TrainingLLMs2025} & 0.394 & \underline{0.580} & \underline{0.390} & 0.292 & \underline{0.260} & 0.048 & 0.108 & 0.296 \\
    Search-R1-3B-Instruct~\cite{jinSearchR1TrainingLLMs2025} &\underline{0.405} & 0.566 & 0.354 & \underline{0.316} & 0.224 & 0.056 & 0.184 & \underline{0.301} \\
    InForge-3B \cite{qian2025scentknowledgeoptimizingsearchenhanced} & 0.386 & 0.504 & 0.374 & 0.236 & 0.114 & {0.060} & \underline{0.272} & 0.278 \\
    AutoRefine-3B-Base \cite{shiSearchRefineThink2025} & 0.281 & 0.428 & 0.272 & 0.208 & 0.162 & 0.036 & 0.136 & 0.217 \\
    AutoRefine-3B-Instruct \cite{shiSearchRefineThink2025} & 0.383 & 0.522 & 0.344 & 0.222 & 0.118 & 0.022 & 0.004 & 0.231 \\
    \midrule
    \textit{Ours} &&&&&&& \\
    InfoReasoner-3B & \textbf{0.453} &\textbf{0.634}  &\textbf{0.442} &\textbf{0.344} &\textbf{0.324}  &\textbf{0.080}  &0.144& \textbf{0.346}  \\
    \midrule[0.75pt]
    \textit{w/ Retrieval (7B)} & & & & & & & & \\

    Search-R1-7B-Base~\cite{jinSearchR1TrainingLLMs2025} & 0.391 & 0.556 & 0.364 & 0.324 & 0.202 & 0.076 & 0.328 & 0.320 \\
    Search-R1-7B-Instruct~\cite{jinSearchR1TrainingLLMs2025} & {0.407} & {0.590} & {0.390} & {0.340} & 0.194 & {0.080} & {0.360} & {0.337} \\
    ReSearch-7B-Base~\cite{chen2025researchlearningreasonsearch} & 0.268 & 0.446 & 0.270 & 0.232 & 0.172 & 0.062 & 0.240 & 0.241 \\
    ReSearch-7B-Instruct~\cite{chen2025researchlearningreasonsearch} & 0.333 & 0.568 & 0.354 & {0.334} & 0.190 & {0.080} & 0.312 & 0.310 \\
    ReasonRAG-7B \cite{zhang2025processvsoutcomereward} &0.294& {0.578}&0.348&0.330&\underline{0.244}&{0.080}&{0.344}&0.318\\
    AutoRefine-7B-Base \cite{shiSearchRefineThink2025} & \underline{0.439} & \underline{0.608} & \underline{0.402} & \underline{0.410} & 0.242 & \underline{0.116} & \underline{0.368} & \underline{0.369} \\
    \midrule
    Ours & & & & & & & & \\
    
    InfoReasoner-7B & \textbf{0.447} & \textbf{0.614} & \textbf{0.416} & \textbf{0.414} & \textbf{0.302} & \textbf{0.120} & \textbf{0.424} & \textbf{0.391} \\
    \bottomrule
    \end{tabular}
    }
    \vspace{-1em}
\end{table*}

\begin{table*}[t]
    \centering
    \caption{Ablation study on the information gain coefficient $\lambda$ across seven QA benchmarks. Bold denotes best results, and underline denotes second best results.}
    \label{tab:ablation-infogain}
    \vspace{-2mm}
    \footnotesize
    \resizebox{\linewidth}{!}{
    \begin{tabular}{p{3.5cm}|ccccccc|c}
    \toprule
    \multirow{2.5}{*}{Methods}& \multicolumn{3}{c}{Single-Hop QA} & \multicolumn{4}{c|}{Multi-Hop QA} & \multirow{2.5}{*}{Avg.} \\
    \cmidrule(lr){2-4} \cmidrule(lr){5-8}
     & NQ & TriviaQA & PopQA & HotpotQA & 2Wiki & MuSiQue & Bamboogle & \\
    \midrule
    InfoReasoner ($\lambda=1.0$) & 0.435 & 0.600 & 0.432 & 0.320 & 0.274 & 0.052 & 0.128 & 0.320 \\
    InfoReasoner ($\lambda=0.8$) & 0.436 & 0.590 & 0.426 & 0.292 & 0.238 & 0.032 & 0.112 & 0.304 \\
    InfoReasoner ($\lambda=0.6$) & \textbf{0.452} & \textbf{0.634} & \textbf{0.442} & \textbf{0.344} & \textbf{0.324} & \textbf{0.080} & \textbf{0.144} & \textbf{0.346} \\
    InfoReasoner ($\lambda=0.4$) & 0.443 & 0.602 & 0.424 & \underline{0.324} & 0.266 & \underline{0.058} & 0.088 & 0.315 \\
    InfoReasoner ($\lambda=0.2$) & \underline{0.448} & \underline{0.618} & \underline{0.432} & 0.322 & \underline{0.288} & 0.044 & \underline{0.136} & \underline{0.327} \\
    InfoReasoner ($\lambda=0.0$) & 0.426 & 0.538 & 0.426 & 0.294 & 0.254 & 0.040 & 0.118 & 0.299 \\
    \bottomrule
    \end{tabular}
    }
\end{table*}
\begin{figure*}[t]
    \centering
    \begin{subfigure}[b]{0.24\linewidth}
        \centering
        \includegraphics[width=\textwidth]{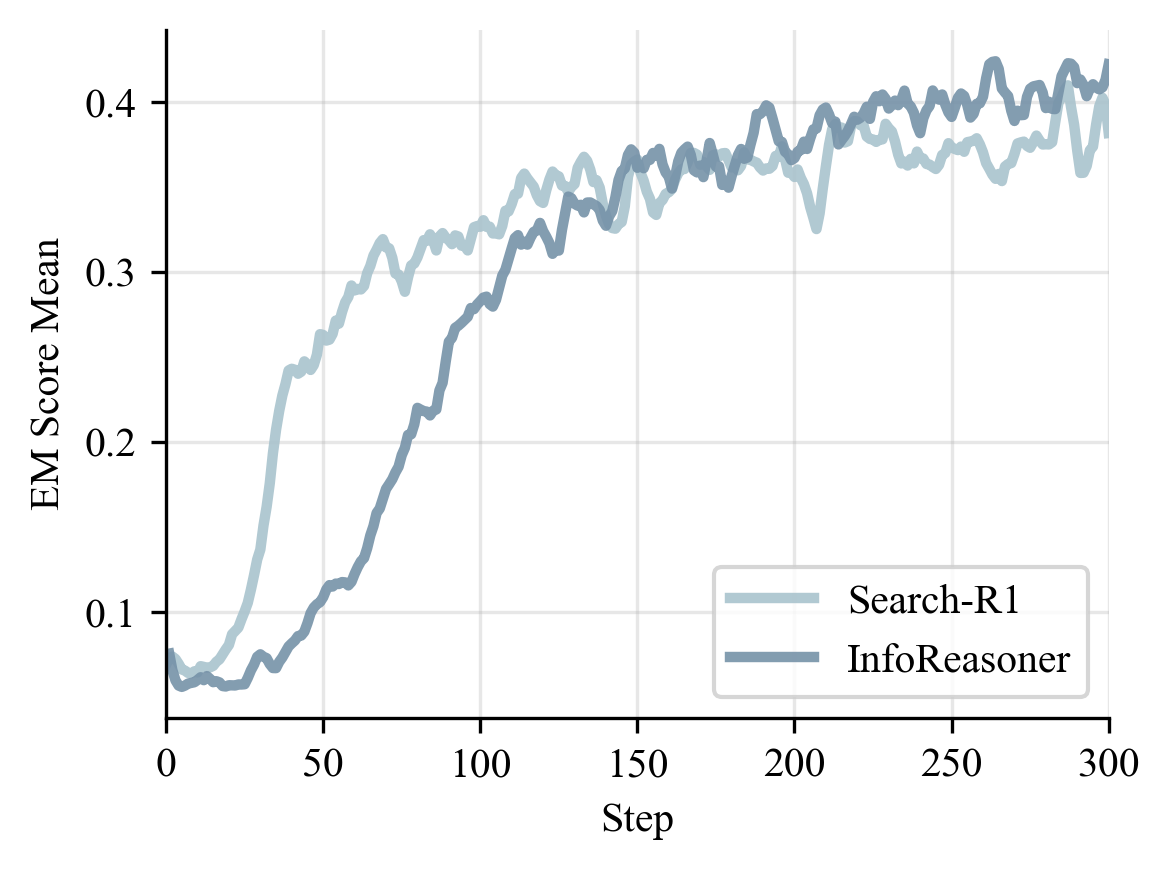}
        \caption{EM Score}
        \label{fig:em_score_comparison}
    \end{subfigure}
    \begin{subfigure}[b]{0.24\linewidth}
        \centering
        \includegraphics[width=\textwidth]{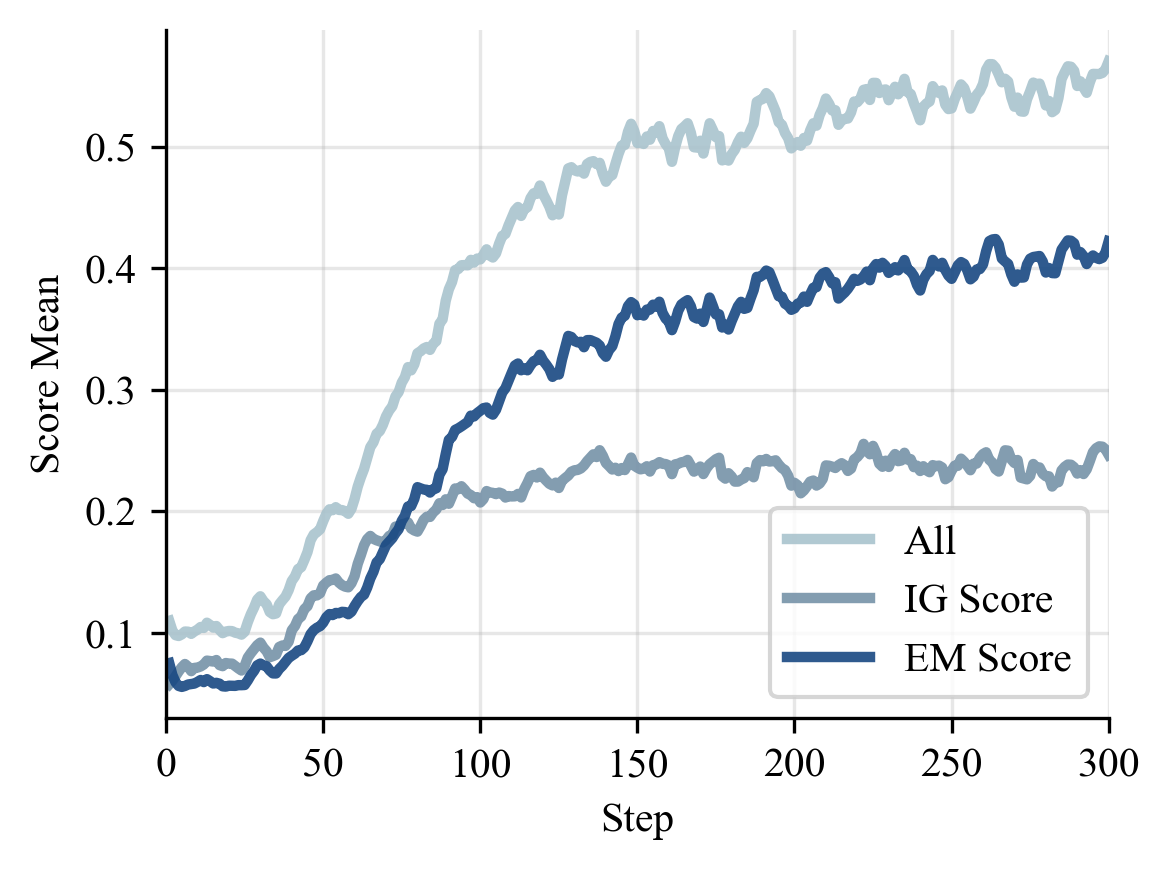}
        \caption{Reward Components}
        \label{fig:score_components}
    \end{subfigure}
    \begin{subfigure}[b]{0.24\linewidth}
        \centering
        \includegraphics[width=\textwidth]{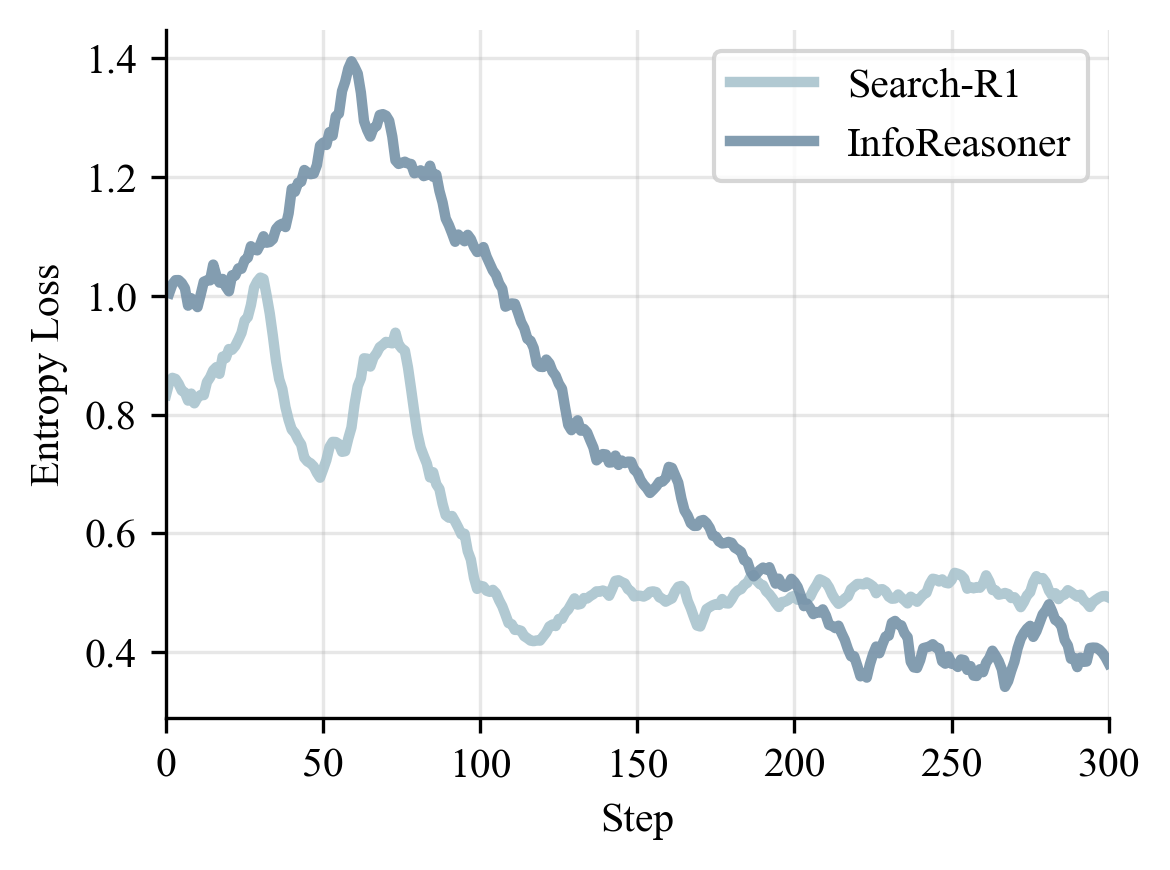}
        \caption{Entropy Loss}
        \label{fig:entropy_loss}
    \end{subfigure}
    \begin{subfigure}[b]{0.24\linewidth}
        \centering
        \includegraphics[width=\textwidth]{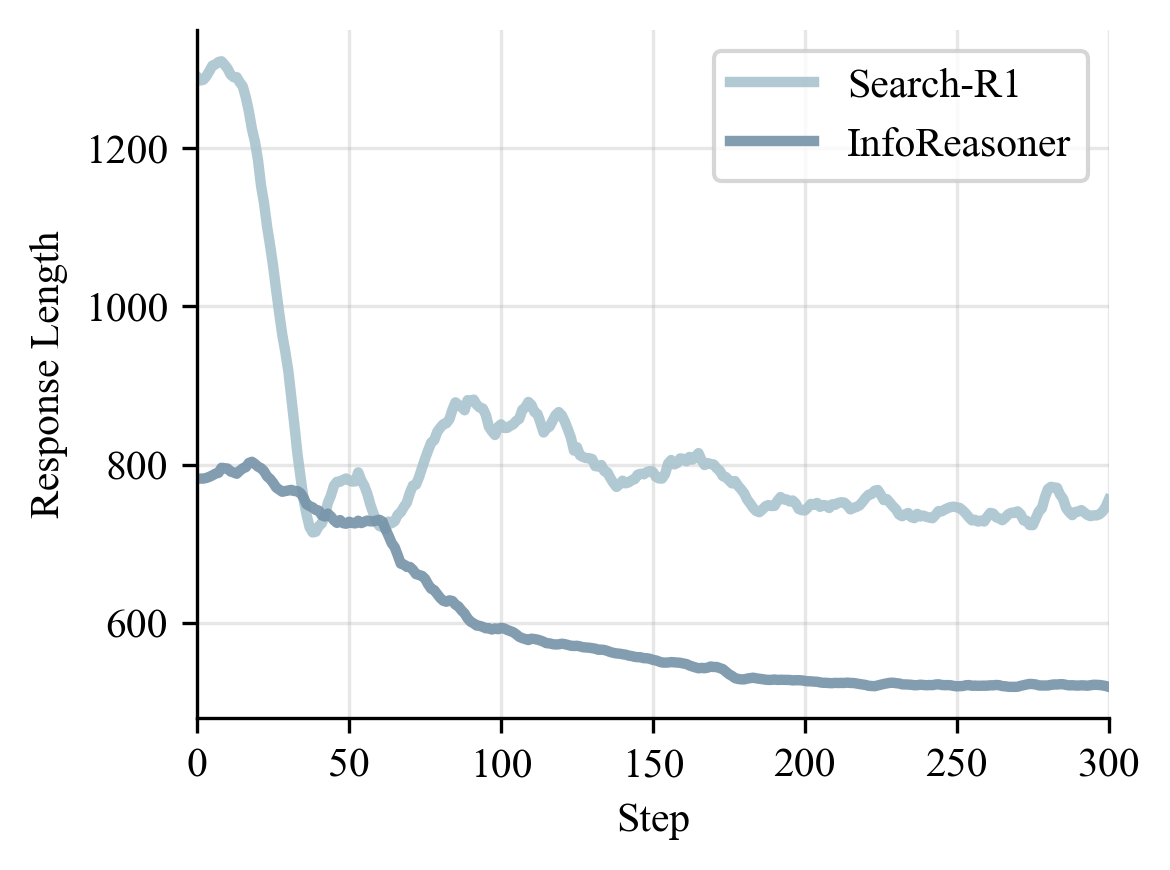}
        \caption{Response Length}
        \label{fig:response_length}
    \end{subfigure}
    \vspace{-0.5em}
    \caption{Training dynamics analysis: (a) EM score trajectories comparison between InfoReasoner and Search-R1; (b) Decomposition of total reward into Information Gain (IG) and EM scores; (c) Entropy loss comparison; (d) Response length comparison between InfoReasoner and Search-R1.}
    \label{fig:training_dynamics}
    \vspace{-1em}
\end{figure*}

\subsection{Synthesizing Information Gain Reward via Semantic Uncertainty Reduction}
With semantic classes established, the belief distribution over these classes for each context can be estimated by the accumulated sequence likelihoods within each class. For context $Z\in\{B(x_t), C_t(a_t)\}$, the belief distribution $p(c\mid Z)$ over semantic classes $c$ can be formulated as:
\begin{equation}
\label{eq:class-prob}
\begin{aligned}
p(c\mid Z)&=\sum_{s\in c} p_\theta(s\mid Z)\,,\\
&=\sum_{s\in c} \prod_j p_\theta(s_j\mid s_{<j},Z).
\end{aligned}
\end{equation}
where $s_j$ is the $j$-th output token and $s_{<j}$ is the token sequence.
The semantic belief under $Z$ can be estimated by the entropy of the belief distribution over all semantic classes:
\begin{equation}
\label{eq:sem-entropy}
H_{\text{sem}}(Z) = -\sum_{c \in \mathcal{C}} p(c\mid Z)\log p(c\mid Z),
\end{equation}
Following the definition of information gain in Eq.~\eqref{eq:ig-definition}, we instantiate $\mathcal{U}$ with semantic entropy in Eq.~\eqref{eq:sem-entropy}, which yields the following full-distribution, entropy-based semantic information gain estimator:
\begin{equation}
\widehat{\mathrm{IG}}^{\mathrm{ent}}_t(x_t, a_t) = H_{\text{sem}}(x_t,B) - H_{\text{sem}}(x_t,C_t),
\end{equation}
By construction, $\widehat{\mathrm{IG}}^{\mathrm{ent}}_t$ measures aggregate uncertainty reduction over all semantic classes. This matches the theoretical uncertainty-reduction viewpoint, yet it can still assign positive reward when the model becomes confidently wrong if its posterior collapses onto an incorrect semantic class.

To provide a more direct learning signal that focuses on the correct answer, we consider the model's belief distribution over the golden answer $y^\star$. We identify the correct semantic class $c^\star$ as the unique class containing sequences semantically equivalent to $y^\star$:
\begin{equation}
c^\star = \arg\max_{c \in \mathcal{C}} \mathbb{I}(c, y^\star),
\end{equation}
where $\mathbb{I}(c, y^\star) = 1$ if the semantic meaning of $c$ is equivalent to $y^\star$ (semantically equivalent via bidirectional entailment), otherwise $0$.
The information gain reward term is then computed using the probability of the correct class $p(c^\star\mid x_t,C_t)$ (as defined in Eq.~\eqref{eq:class-prob}) under different contexts:
\begin{equation}
r_t(x_t, a_t) = \log p(c^\star\mid C_t) - \log p(c^\star\mid B).
\label{eq:gold-class-ig}
\end{equation}
This formulation rewards the policy for increasing the probability mass on the correct semantic class after retrieving evidence, providing a direct signal that guides retrieval toward information that supports the correct answer. 
\revadd{Equivalently, Eq. \eqref{eq:gold-class-ig} is a gold-class uncertainty reduction objective with $U_{\mathrm{gold}}(Z)=-\log p(c^\star\mid Z)$, since $r_t=U_{\mathrm{gold}}(B)-U_{\mathrm{gold}}(C_t)$. In the rest of the paper, $\widehat{\mathrm{IG}}^{\mathrm{ent}}_t$ denotes the theoretical entropy-based quantity, while $r_t$ denotes the practical reward optimized during training.}

\subsection{Policy Optimization with Information Gain Reward}

To optimize the retrieval policy $\pi_\theta$, we employ Group Relative Policy Optimization (GRPO)~\cite{shaoDeepSeekMathPushingLimits2024}, which estimates advantages from a group of sampled outputs for the same input, reducing training instability in sparse reward settings.

The total reward $R_i$ for each output integrates both the final outcome correctness and the intermediate information gain. Specifically, it is a weighted sum of the task-specific exact match score and the cumulative information gain reward:
\begin{equation}
R_i = \mathbb{I}(y_i = y^\star) + \lambda \cdot \frac{1}{T_i}\sum_{t} r_t(x_t, a_{t,i}),
\end{equation}
where $\mathbb{I}(y_i = y^\star)$ is the exact match indicator function between the predicted answer $y_i$ and the ground truth $y^\star$, and \revadd{$r_t$ is the estimated information gain gain reward for retrieval action $a_{t,i}$ at step $t$.} The hyperparameter $\lambda$ balances the incentive for accurate reasoning with the \revadd{intermediate supervision} for informative retrieval. This composite reward structure encourages the policy to learn strategies that are both accurate and information-efficient.

%% file: MySec/experiments.tex
\section{Experiments}
\label{sec:experiments}

\begin{figure}[t]
    \centering
    \begin{tcolorbox}[
    colback=c1!5,
    colframe=c1!60,
    title=\textbf{Case Study: InfoReasoner},
    fonttitle=\bfseries\color{white},
    boxrule=1pt,
    arc=4pt,
    left=6pt,
    right=6pt,
    top=6pt,
    bottom=6pt,
    enhanced,
    shadow={2pt}{-2pt}{0pt}{gray!30}
]
\footnotesize
\textbf{Query:} In what city was the band behind the album Love Bites formed? \\
\textbf{Ground-truth:} Bolton, England \\
\textbf{InfoReasoner's Reasoning Process:} \\
\texttt{<think>} To answer this question, I need to recall the band behind the album ``Love Bites" and its formation city. Let's start with identifying the band. 
\\\texttt{</think>} \\
\texttt{<search>} band Love Bites album \\\texttt{</search>} \\
\texttt{<information>} Love Bites is the second studio album by English punk rock band Buzzcocks, released in 1978...\\\texttt{</information>} \\
\texttt{<think>} Based on the search results, the correct band behind the album ``Love Bites" is the English punk rock band Buzzcocks. Now, to find the formation city of the band Buzzcocks. \\\texttt{</think>} \\
\texttt{<search>} formation city Buzzcocks band \\\texttt{</search>} \\
\texttt{<information>} Buzzcocks are an English punk rock band, formed in Bolton, England, in 1976 by Pete Shelley and Howard Devoto...\\\texttt{</information>} \\
\texttt{<think>} From the search results, the band Buzzcocks was formed in Bolton, England. Therefore, the album ``Love Bites" was formed in the city of Bolton, England. \\\texttt{</think>} \\
\texttt{<answer>} Bolton, England \texttt{</answer>}
\end{tcolorbox}
    \vspace{-0.5em}
    \caption{Case study of InfoReasoner.}
    \label{fig:case_study}
    \vspace{-2em}
\end{figure}

\subsection{Experimental Setup}

\textbf{Datasets.} Our evaluation covers seven question answering (QA) datasets. For single-hop QA, we use Natural Questions (NQ)~\cite{kwiatkowski-etal-2019-natural}, TriviaQA~\cite{joshi-etal-2017-triviaqa}, and PopQA~\cite{mallen-etal-2023-trust}. For multi-hop QA, the benchmarks are HotpotQA~\cite{yang-etal-2018-hotpotqa}, 2WikiMultiHopQA (2Wiki)~\cite{xanh2020_2wikimultihop}, MuSiQue~\cite{trivedi-etal-2022-musique}, and Bamboogle~\cite{press-etal-2023-measuring}. These datasets represent a broad spectrum of search and reasoning tasks, facilitating thorough evaluation. 
Additionally, our model is trained using the training splits of NQ and HotpotQA, and evaluated on the test sets of these seven QA datasets. To test generalization beyond short-answer QA, we evaluate on MATH500~\cite{hendrycks2021measuringmathematicalproblemsolving} for tool-integrated mathematical reasoning and WebDetective~\cite{song2025demystifyingdeepsearchholistic} for long-horizon multi-hop deep search.

\textbf{Baseline Methods.} {For the seven QA benchmarks, we compare with two categories of baselines: retrieval-augmented baselines and retrieval-free baselines.}
The retrieval-augmented baselines include vanilla retrieval-augmented generation (RAG), Search-o1, IRCoT, Search-R1-3B-Base (and Instruct), Search-R1-7B-Base (and Instruct), ReSearch-7B-Base (and Instruct), InForge-3B, ReasonRAG-7B, AutoRefine-3B-Base (and Instruct). The retrieval-free baselines include Direct Generation, supervised fine-tuning (SFT), and R1-like training (R1). Implementation details are provided in Appendix~\ref{sec:implementation_details}.
For tool-integrated reasoning, we compare against ZeroTIR-7B and SimpleTIR-7B. For WebDetective, we compare against GPT-5-Chat, Gemini-2.5-Pro, and DeepSeek-V3.1.
\subsection{Main Comparative Results}

Table~\ref{tab:rq1} presents the comparison between InfoReasoner and various baselines across seven QA benchmarks. InfoReasoner establishes new state-of-the-art performance at both 3B and 7B scales.
In the 3B parameter setting, InfoReasoner-3B achieves an average accuracy of 34.6\%, significantly outperforming Search-R1-3B-Instruct (30.1\%) and the standard RAG model (27.0\%). It even surpasses several 7B-scale baselines, such as Search-R1-7B-Instruct (33.7\%), demonstrating exceptional parameter efficiency.
InfoReasoner-3B achieves the best performance among 3B models on single-hop QA scenarios (NQ, TriviaQA, PopQA) and multi-hop tasks, specifically achieving top results on HotpotQA (34.4\%) and 2Wiki (32.4\%).
Scaling to 7B parameters, InfoReasoner-7B improves the average accuracy to 39.1\%, consistently outperforming all 7B baselines, including AutoRefine-7B-Base (36.9\%) and Search-R1-7B-Instruct (33.7\%), achieving the best results on all evaluated datasets.
Beyond the seven short-answer QA benchmarks, we further evaluate InfoReasoner in tool-integrated and long-horizon agentic settings. Table~\ref{tab:tir_math500_main} shows that on MATH500 with Python tool execution, InfoReasoner-7B reaches 85.8\%, outperforming ZeroTIR-7B (80.2\%) and SimpleTIR-7B (84.6\%). Table~\ref{tab:webdetective_main} shows that on WebDetective, a long-horizon deep-search benchmark, InfoReasoner-3B improves Pass@1 from 26.5\% to 30.0\% over Search-R1-3B-Base.

\begingroup
\begin{table*}[t]
    \centering
    \caption{WebDetective long-horizon search results. Frontier-model results are from the benchmark paper~\cite{song2025demystifyingdeepsearchholistic}.}
    \vspace{-2mm}
    \label{tab:webdetective_main}
    \resizebox{\linewidth}{!}{
    \begin{tabular}{lccccccc}
    \toprule
    Model & Knowledge Suff. $\uparrow$ & Search Score $\uparrow$ & Generation Score $\uparrow$ & Knowledge Util. $\uparrow$ & Forget $\downarrow$ & Lead-astray $\downarrow$ & Pass@1 $\uparrow$ \\
    \midrule
    GPT-5-Chat & 0.580 & 0.595 & \textbf{0.157} & \textbf{0.281} & 0.474 & \textbf{0.319} & 0.295 \\
    Gemini-2.5-Pro & 0.655 & 0.730 & 0.116 & 0.247 & \textbf{0.443} & 0.351 & 0.285 \\
    DeepSeek-V3.1 & 0.615 & 0.565 & 0.136 & 0.163 & 0.447 & 0.447 & 0.170 \\
    Search-R1-3B-Base & 0.880 & 0.905 & 0.110 & 0.250 & 0.483 & 0.364 & 0.265 \\
    InfoReasoner-3B & \textbf{0.890} & \textbf{0.910} & 0.119 & 0.269 & 0.506 & 0.337 & \textbf{0.300} \\
    \bottomrule
    \end{tabular}
    }
    \vspace{-1em}
\end{table*}
\endgroup

\subsection{Ablation Study on the Information Gain Coefficient}

We conduct an ablation study by varying the information gain coefficient $\lambda$ from 0.0 to 1.0 across all seven QA benchmarks (Table~\ref{tab:ablation-infogain}). Our experiments reveal that $\lambda=0.6$ achieves the best overall performance with an average accuracy of 34.6\%. When $\lambda=0.0$ (degenerating to Search-R1), performance drops to 29.9\%, demonstrating that the information gain signal is essential and improves performance by 4.7 percentage points. When $\lambda$ is too large ($\lambda=1.0$ or $0.8$), performance degrades to 32.0\% and 30.4\% respectively, indicating that over-emphasizing information gain harms overall performance. The optimal $\lambda=0.6$ suggests that \textit{information gain should contribute substantially but not dominate the reward function, balancing epistemic exploration with pragmatic exploitation}.

\vspace{-2mm}
\subsection{Training Dynamics Analysis}

To further analyze our method, we visualize the evolution of exact match (EM) scores, reward components, entropy loss, and response length during the training process.

\textbf{Performance Comparison.}
Figure~\ref{fig:em_score_comparison} presents the EM score trajectories comparing InfoReasoner ($\lambda=0.6$) with Search-R1 ($\lambda=0.0$). 
InfoReasoner exhibits slower EM score growth in the early training phase compared to Search-R1, because it assigns \textit{positive rewards not only to correct answers but also to retrievals that provide useful information}, even when the final answer is incorrect.
This reward structure encourages exploration of diverse retrieval strategies rather than premature convergence to locally optimal patterns.
After this initial exploration phase, InfoReasoner's learning curve accelerates and eventually surpasses Search-R1, achieving a higher final EM score.
This validates that the information gain component enables beneficial exploration, leading to a more robust retrieval policy and superior question-answering performance.

\textbf{Reward Component Analysis.}
Figure~\ref{fig:score_components} decomposes the total reward into the outcome-based EM score and the information gain (IG) score. 
The IG score provides a consistent dense reward signal, complementing the sparse EM supervision, while the EM score steadily increases throughout training, demonstrating that optimizing information gain translates into improved question-answering performance.

\begingroup
\begin{table}[t]
    \centering
    \caption{Tool-integrated reasoning results on MATH500.}
    \vspace{-2mm}
    \label{tab:tir_math500_main}
    \begin{tabular}{l|c}
    \toprule
    Method & MATH500 \\
    \midrule
    ZeroTIR-7B ~\cite{mai2025agentrlscalinglaw} & 0.802 \\
    SimpleTIR-7B ~\cite{xue2025simpletirendtoendreinforcementlearning} & 0.846 \\
    InfoReasoner-7B & \textbf{0.858} \\
    \bottomrule
    \end{tabular}
    \vspace{-1.5em}
\end{table}
\endgroup

\textbf{Entropy Loss Analysis.} 
Figure~\ref{fig:entropy_loss} compares the entropy loss trajectories of InfoReasoner and Search-R1. 
InfoReasoner starts with a significant increase in entropy loss, reaching a peak higher than Search-R1, indicating that the information gain reward encourages exploration of diverse retrieval queries and reasoning paths.
Following this exploration phase, the entropy loss decreases sharply and stabilizes at a lower level, suggesting convergence to a more certain and confident policy.

\textbf{Response Length Analysis.}
Figure~\ref{fig:response_length} compares the response length trajectories of InfoReasoner and Search-R1 during training.
InfoReasoner demonstrates a shorter and more stable response length, approximately 30\% shorter than Search-R1. This indicates that InfoReasoner learns to generate more concise answers, as the information gain reward helps the model identify and prioritize key information from retrieved documents without sacrificing accuracy.

\textbf{Summary.}
Together, these training dynamics provide mechanistic insights into why InfoReasoner achieves superior performance: it enables sufficient exploration (via information gain) while maintaining focus on task performance (via EM reward), striking the right balance between epistemic exploration and pragmatic exploitation. Furthermore, the shorter response length demonstrates that InfoReasoner not only achieves better accuracy but also generates more efficient outputs, highlighting the practical benefits of our approach.

\vspace{-1mm}
\subsection{Study on the Information Gain Reward}
\vspace{-1mm}

Figure~\ref{fig:case_study} presents a case study demonstrating InfoReasoner's reasoning process on a multi-hop question, showing how it decomposes the query, conducts iterative searches, and synthesizes information to arrive at the correct answer.

Figure~\ref{fig:info_gain_analysis_box} presents the information gain values computed for different retrieval scenarios: providing only the first retrieved document (identifying Buzzcocks as the band behind "Love Bites"), providing only the second retrieved document (identifying Bolton as Buzzcocks' formation city), and providing both documents together. The boxplot reveals three patterns. First, \textit{Info B} yields higher median information gain than \textit{Info A}, suggesting the second document is more directly useful. Second, the naive \textit{Sum} (Info A + Info B) improves over either document alone but falls below \textit{Combined (Info A+B)}, indicating that jointly observing both documents reduces uncertainty more effectively than treating their contributions as additive. Third, \textit{Combined} exhibits the highest central tendency, demonstrating a synergistic effect where the two documents together provide substantially more information than the sum of their individual contributions. This non-additive information gain validates our reward design that \textit{assigns higher rewards to retrieval strategies gathering complementary evidence across reasoning hops}.

\subsection{Sensitivity Analysis of Group Size on Reward Estimation}

We investigate the trade-off between estimation accuracy and computational efficiency by varying the group size~$M$. We establish a pseudo-ground truth oracle using $N=64$ sequences and evaluate the estimation error of our information gain metric via bootstrap subsampling with $M \in \{4, \dots, 60\}$.
Figure~\ref{fig:info_gain_analysis_line} shows that the estimation error decays rapidly as $M$ increases, following the theoretical Monte Carlo convergence rate. The curve exhibits an elbow around $M=12$, achieving low MAE ($<0.02$) while reducing computational overhead by $4\times$ compared to the oracle baseline. Increasing $M$ beyond $12$ yields diminishing returns. We adopt $M=12$ as the optimal configuration, balancing reward signal quality and training efficiency.

\begin{figure}[t]
    \centering
    \begin{subfigure}[b]{0.36\linewidth}
        \centering
        \includegraphics[width=\textwidth]{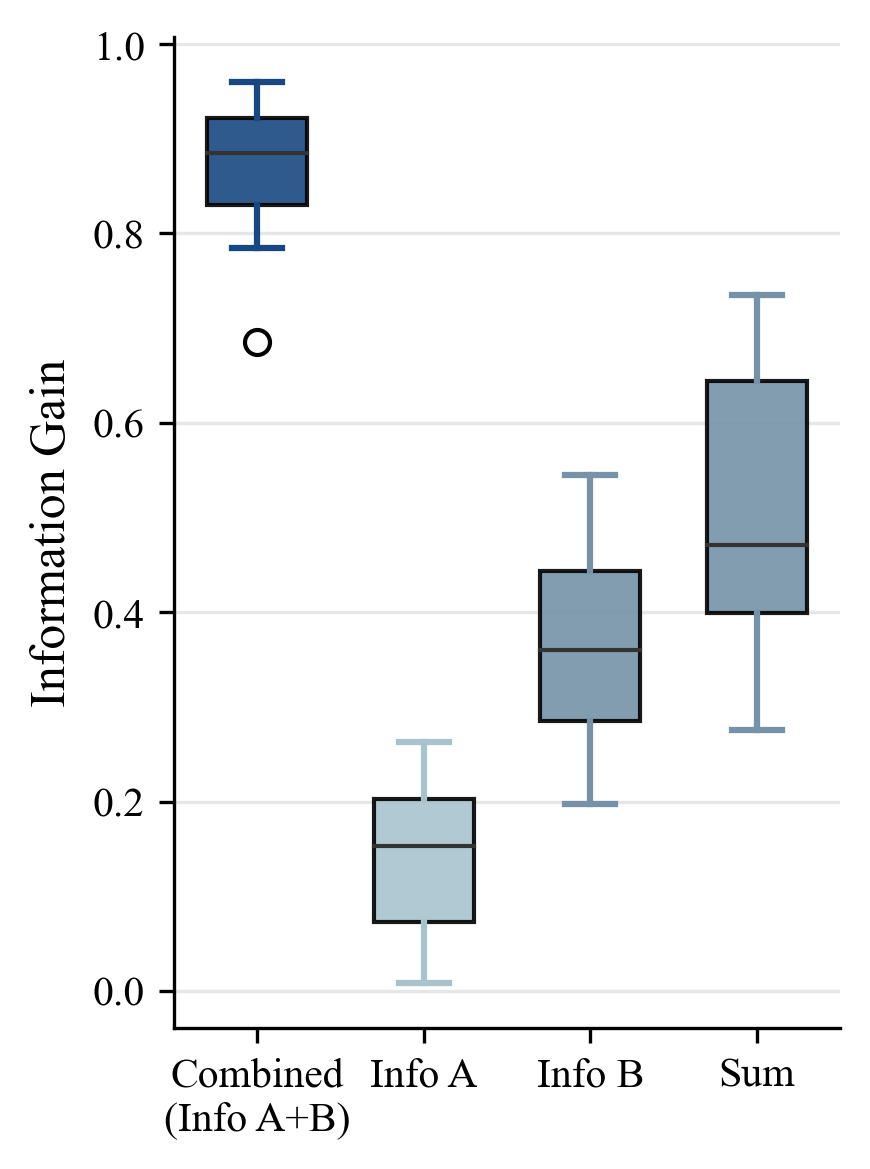}
        \caption{}
        \label{fig:info_gain_analysis_box}
    \end{subfigure}
    \hfill
    \begin{subfigure}[b]{0.6\linewidth}
        \centering
        \includegraphics[width=\textwidth]{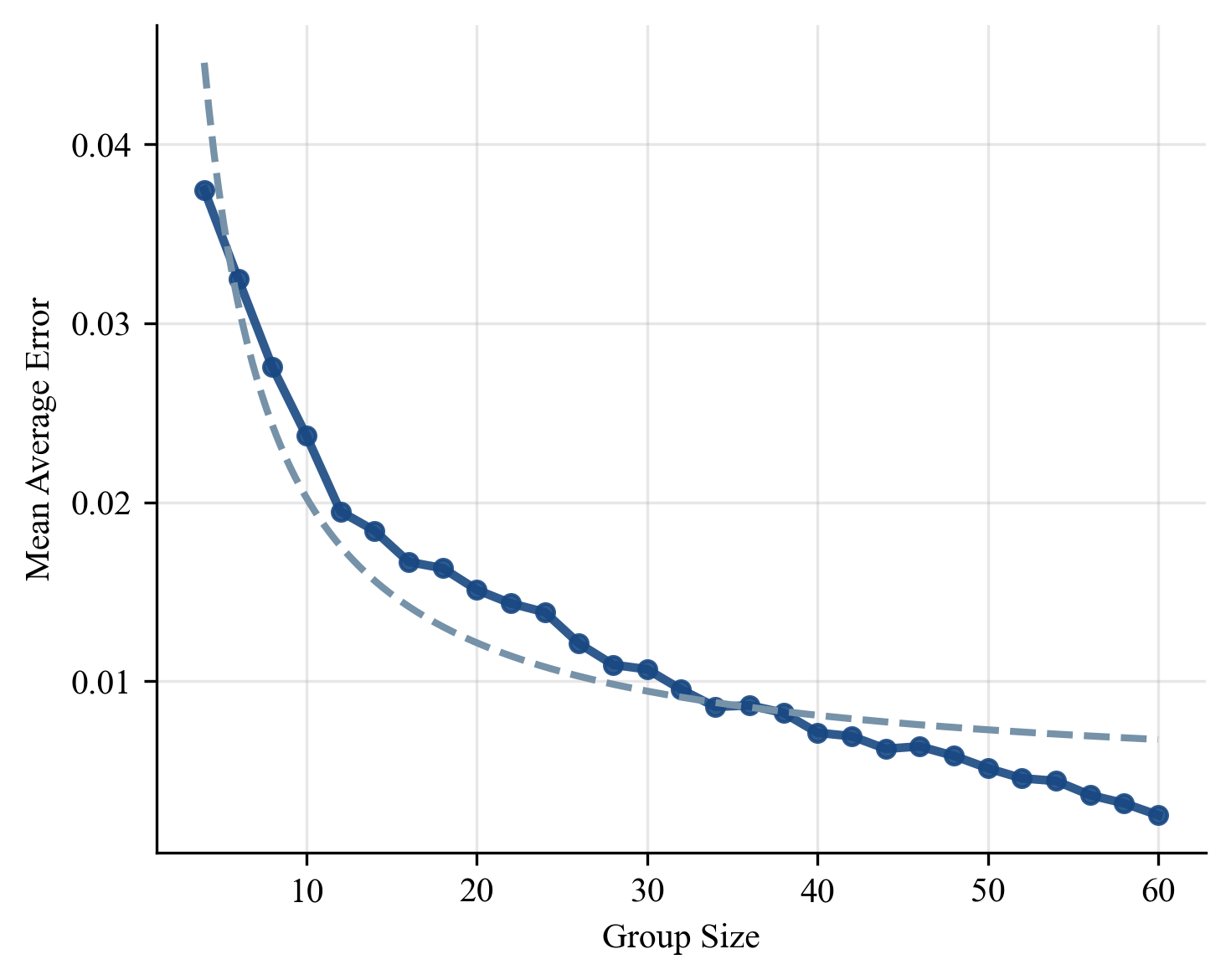}
        \caption{}
        \label{fig:info_gain_analysis_line}
    \end{subfigure}
    \vspace{-0.5em}
    \caption{Information gain analysis: (a) Comparison across different retrieval scenarios demonstrating synergistic effects when jointly observing multiple documents; (b) Sensitivity analysis of group size $M$ on estimation accuracy, showing the trade-off between computational efficiency and reward signal quality.}
    \label{fig:info_gain_analysis}
    \vspace{-1.5em}
\end{figure}

%% file: MySec/conclusion.tex
\vspace{-2mm}

\section{Conclusion}

We introduced \textit{InfoReasoner}, a framework for optimizing agentic reasoning with retrieval through a synthetic semantic information gain reward. Theoretically, we redefine information gain as uncertainty reduction over belief states, establishing formal guarantees. Practically, we propose an \revadd{output-aware semantic information gain reward} that computes information gain from semantic equivalence classes using bidirectional textual entailment, eliminating the need for manual \revadd{intermediate} retrieval annotations.
Experiments across seven QA benchmarks demonstrate that InfoReasoner achieves state-of-the-art performance at both 3B and 7B scales. The information gain component enables beneficial exploration during training, leading to more robust retrieval policies. This work provides a theoretically grounded and scalable path toward optimizing agentic reasoning with retrieval.
\section*{Acknowledgements}

This work was supported in part by the Hong Kong Innovation and Technology Commission under InnoHK Project CIMDA, in part by the Hong Kong SAR Government under the Global STEM Professorship and Research Talent Hub, and in part by the Hong Kong Jockey Club under the Hong Kong JC STEM Lab of Smart City (Ref.: 2023-0108). 
\section*{Impact Statement}

The development of InfoReasoner has broader implications for the deployment of Agentic Reasoning Models in real-world scenarios:

\textit{Advancing Trustworthy Agentic Search.} By grounding the model's motivation in ``uncertainty reduction," we move closer to AI agents that actively verify their knowledge rather than hallucinating answers. This is critical for high-stakes applications like medical or legal information seeking, where the reliability of the reasoning process is as important as the final answer.

\textit{Efficient Information Retrieval.} Unnecessary searching wastes computational resources and bandwidth. Our Information Gain theoretic validation encourages the model to search only when it lacks confidence, leading to more resource-efficient system designs that minimize API calls to external search engines.

\textit{Generalizability of Semantic Metrics.} The proposed framework for estimating semantic uncertainty and information gain is not limited to QA. It provides a generalized methodology for measuring the ``value of information" in any text-generation task, potentially benefiting fields such as automated scientific research, complex planning, and autonomous code generation.



%% file: MySec/appendix.tex

\begin{center}
    \Large \textbf{Appendix}
\end{center}
\section*{Content}
\hyperref[sec:related_work]{A. Related Work} \dotfill \pageref{sec:related_work}\\
\makebox[1em][l]{} \hyperref[sec:agentic_reasoning]{A.1 Agentic Reasoning} \dotfill \pageref{sec:agentic_reasoning}\\
\makebox[1em][l]{} \hyperref[sec:retrieval-augmented-generation]{A.2 Retrieval-Augmented Generation} \dotfill \pageref{sec:retrieval-augmented-generation}\\
\makebox[1em][l]{} \hyperref[sec:reinforcement-learning-for-llm-reasoning]{A.3 Reinforcement Learning for LLM Reasoning} \dotfill \pageref{sec:reinforcement-learning-for-llm-reasoning}\\
\hyperref[sec:proofs]{B. Theoretical Derivations and Proofs} \dotfill \pageref{sec:proofs}\\
\makebox[1em][l]{} \hyperref[sec:llm_belief_update_derivation]{B.1 Derivation of LLM Belief Update} \dotfill \pageref{sec:llm_belief_update_derivation}\\
\makebox[1em][l]{} \hyperref[prop:non-negativity-proof]{B.2 Proof of Proposition~\ref{prop:non-negativity} (Non-negativity under Ideal Updates)} \dotfill \pageref{prop:non-negativity-proof}\\
\makebox[1em][l]{} \hyperref[prop:telescoping-additivity-proof]{B.3 Proof of Proposition~\ref{prop:telescoping-additivity} (Telescoping Additivity)} \dotfill \pageref{prop:telescoping-additivity-proof}\\
\makebox[1em][l]{} \hyperref[prop:monotonicity-wrt-information-channels-proof]{B.4 Proof of Proposition~\ref{prop:monotonicity-wrt-information-channels} (Monotonicity)} \dotfill \pageref{prop:monotonicity-wrt-information-channels-proof}\\
\hyperref[sec:algorithm_details]{C. Algorithm Details} \dotfill \pageref{sec:algorithm_details}\\
\makebox[1em][l]{} \hyperref[subsec:reward_computation]{C.1 Information-Gain Reward Computation} \dotfill \pageref{subsec:reward_computation}\\
\makebox[1em][l]{} \hyperref[subsec:training_framework]{C.2 InfoReasoner Training Algorithm} \dotfill \pageref{subsec:training_framework}\\
\hyperref[sec:implementation_details]{D. Implementation Details} \dotfill \pageref{sec:implementation_details}\\
\makebox[1em][l]{} \hyperref[subsec:training_details]{D.1 Training Details} \dotfill \pageref{subsec:training_details}\\
\makebox[1em][l]{} \hyperref[subsec:prompt_details]{D.2 Prompt Details} \dotfill \pageref{subsec:prompt_details}\\
\hyperref[sec:additional_results]{E. Additional Experimental Results} \dotfill \pageref{sec:additional_results}\\
\makebox[1em][l]{} \hyperref[sec:efficiency_analysis]{E.1 Training and Inference Efficiency} \dotfill \pageref{sec:efficiency_analysis}\\
\makebox[1em][l]{} \hyperref[sec:ablation_study_on_search_turns]{E.2 Ablation Study on Search Turns} \dotfill \pageref{sec:ablation_study_on_search_turns}\\
\makebox[1em][l]{} \hyperref[sec:more_case_studies]{E.3 More Case Studies} \dotfill \pageref{sec:more_case_studies}\\
\hyperref[sec:limitations]{F. Limitations and Future Work} \dotfill \pageref{sec:limitations}\\

\input{MySec/related_work.tex}

\section{Theoretical Derivations and Proofs}
\label{sec:proofs}

\subsection{Derivation of LLM Belief Update}
\label{sec:llm_belief_update_derivation}

Here we provide the detailed derivation of the belief update rule for LLM agents, justifying the simplification used in Section~\ref{sec:preliminaries}.

In the context of LLM-based reasoning agents, the belief update must account for the fact that observations are generated through a two-stage process involving both environmental retrieval and LLM generation.
The agent takes action $a_t$ (e.g., retrieval), and the environment returns evidence $e_t \sim P(\cdot \mid Y=y, a_t)$. The LLM then generates an observation $O_t \sim P_{\mathrm{LLM}}(\cdot \mid e_t, b_t)$.

The complete observation likelihood is given by marginalizing over the retrieved evidence $e_t$:
\begin{equation}
    \begin{aligned}
P(O_t \mid Y=y, a_t, b_t) = \mathbb{E}_{e_t \sim P(\cdot \mid Y=y, a_t)} \left[ P_{\mathrm{LLM}}(O_t \mid e_t, b_t) \right].
    \end{aligned}
\end{equation}

The belief state is then updated according to the generalized Bayes rule:
\begin{equation}
b_{t+1}(y) = \frac{P(O_t \mid Y=y, a_t, b_t) \, b_t(y)}{\sum_{y^{\prime} \in \mathcal{Y}} P(O_t \mid Y=y^{\prime}, a_t, b_t) \, b_t(y^{\prime})}.
\label{eq:llm-belief-update-full}
\end{equation}
This formulation differs from standard POMDPs because the observation likelihood explicitly depends on the current belief $b_t$, reflecting how LLMs interpret and generate content based on their current understanding.

To make this tractable, we assume the LLM's interpretation of new evidence dominates its prior bias (i.e., $P_{\mathrm{LLM}}(O_t \mid e_t, b_t) \approx P_{\mathrm{LLM}}(O_t \mid e_t)$). Under this assumption, the likelihood simplifies to:
\begin{equation}
P(O_t \mid Y=y, a_t, b_t) \approx \mathbb{E}_{e_t \sim P(\cdot \mid Y=y, a_t)} [P_{\mathrm{LLM}}(O_t \mid e_t)] = P(O_t \mid Y=y, a_t).
\end{equation}
Substituting this back into Eq.~\eqref{eq:llm-belief-update-full} yields the standard form used in our analysis:
\begin{equation}
b_{t+1}(y) = \frac{P(O_t \mid Y=y, a_t) \, b_t(y)}{\sum_{y^{\prime} \in \mathcal{Y}} P(O_t \mid Y=y^{\prime}, a_t) \, b_t(y^{\prime})}.
\end{equation}

\paragraph{Discussion on Simplification Assumptions.}
We acknowledge that the conditional independence assumption ($P_{\mathrm{LLM}}(O_t \mid e_t, b_t) \approx P_{\mathrm{LLM}}(O_t \mid e_t)$) represents an idealization where the LLM fully grounds its generation in the retrieved evidence, effectively suppressing conflicting prior biases. In practice, LLMs may exhibit ``confirmation bias'' or specific parametric knowledge tailored to the query, where the prior belief $b_t$ continues to influence generation despite new evidence $e_t$. The divergence from this assumption means that our theoretical bounds on uncertainty reduction effectively serve as an upper bound on performance in ideal grounding scenarios. However, this simplification is necessary to construct a tractable reward signal, and empirically, minimizing the uncertainty under this model actively encourages the policy to approximate this ideal behavior by seeking high-quality evidence that overwhelms prior ambiguity.

\subsection{Proof of Proposition \ref{prop:non-negativity} (Non-negativity under Ideal Updates)}
\label{prop:non-negativity-proof}
This proof establishes that under ideal Bayesian updates, information gathering is never detrimental to the agent's knowledge state in expectation.
\begin{proof}
    The non-negativity of Expected Information Gain (EIG) follows directly from the \textit{Expected Non-Increase} axiom of the uncertainty functional $\mathcal{U}$. By Eq.~\eqref{eq:expected-monotonicity}, we have:
    \begin{equation}
    \mathbb{E}_{O \sim P(\cdot \mid b,a)}[\mathcal{U}(b_{t+1})] \le \mathcal{U}(b).
    \end{equation}
    Subtracting the expected posterior uncertainty from the prior uncertainty $\mathcal{U}(b)$, we obtain:
    \begin{equation}
    \mathrm{EIG}(a\mid b) = \mathcal{U}(b) - \mathbb{E}_{O \sim P(\cdot \mid b,a)}[\mathcal{U}(b_{t+1})] \ge 0.
    \end{equation}
    Equality holds ($\mathrm{EIG}=0$) if and only if the posterior belief equals the prior belief almost surely for all possible observations $O$. This occurs when the observation carries no information about the latent task variable $Y$, i.e., $O \perp Y \mid (b,a)$.

    In the special case where $\mathcal{U}$ is the Shannon entropy, $\mathcal{U}_H(b)=H(b)$, the EIG coincides with the conditional mutual information $I(Y;O \mid a,b)$. Since mutual information is always non-negative, the \emph{expected non-increase} property is naturally satisfied without requiring it as an independent axiom.
\end{proof}
This result motivates maximizing $\mathrm{EIG}$ as an ideal objective under consistent Bayesian updates. In practice, we optimize the realized signal $\mathrm{IG}_t$, which may be negative and thus naturally penalizes misleading retrievals.

\subsection{Proof of Proposition \ref{prop:telescoping-additivity} (Telescoping Additivity)}
\label{prop:telescoping-additivity-proof}
This proof demonstrates that the sum of step-wise information gains is mathematically equivalent to the total reduction in uncertainty over a complete reasoning trajectory.
\begin{proof}
    Consider a sequence of belief states $b_0, b_1, \dots, b_T$ generated along a trajectory. By Definition~\ref{def:one-step-ig}, the realized information gain at each step $t$ is:
    \begin{equation}
    \mathrm{IG}_t = \mathcal{U}(b_t) - \mathcal{U}(b_{t+1}).
    \end{equation}
    Summing these incremental gains over the entire trajectory from $t=0$ to $t=T-1$:
    \begin{equation}
    \begin{aligned}
    \sum_{t=0}^{T-1} \mathrm{IG}_t &= \sum_{t=0}^{T-1} \left( \mathcal{U}(b_t) - \mathcal{U}(b_{t+1}) \right) \\
    &= (\mathcal{U}(b_0) - \mathcal{U}(b_1)) + (\mathcal{U}(b_1) - \mathcal{U}(b_2)) + \dots + (\mathcal{U}(b_{T-1}) - \mathcal{U}(b_T)).
    \end{aligned}
    \end{equation}
    Notice that the intermediate terms $\mathcal{U}(b_1), \dots, \mathcal{U}(b_{T-1})$ cancel out, leaving only the initial and final uncertainty values:
    \begin{equation}
    \sum_{t=0}^{T-1} \mathrm{IG}_t = \mathcal{U}(b_0) - \mathcal{U}(b_T).
    \end{equation}
\end{proof}
Telescoping additivity is a critical property for reinforcement learning. It proves that by maximizing the immediate, dense reward $\mathrm{IG}_t$ at each step, the agent is effectively optimizing the global objective of minimizing final uncertainty about the answer.

\begin{algorithm}[t]
    \caption{Information Gain Reward Computation}
    \label{alg:reward_computation}
    \begin{algorithmic}[1]
        \State \textbf{Input:} Question $x$, retrieved evidence $e_t$, reasoning model $\pi_\theta$, group size $M$, golden answer $y^\star$, NLI model $P_{\text{NLI}}$, threshold $\tau$.
        \State \textbf{Output:} Information gain reward $\widehat{\text{IG}}_t$.
        
        \State \textbf{Function} \textsc{EstimateBelief}($Z, y^\star$):
        \State \quad \textcolor{gray}{// 1. Sample semantic variation from the model}
        \State \quad Sample $M$ candidate responses $\mathcal{S} = \{s_1, \dots, s_M\} \sim \pi_\theta(\cdot \mid Z)$.
        \State \quad Compute sequence likelihoods $\{p_\theta(s_i \mid Z)\}_{i=1}^M$.
        
        \State \quad \textcolor{gray}{// 2. Construct Semantic Equivalence Graph via Bidirectional Entailment}
        \State \quad Initialize undirected graph $G = (V, E)$ with nodes $V = \{1, \dots, M\}$.
        \State \quad \textbf{for} $i=1$ \textbf{to} $M$, $j=i+1$ \textbf{to} $M$ \textbf{do}
        \State \quad \quad $p_{\text{fwd}} \gets P_{\text{NLI}}(s_i \vDash s_j \mid x)$; \quad $p_{\text{bwd}} \gets P_{\text{NLI}}(s_j \vDash s_i \mid x)$.
        \State \quad \quad \textbf{if} $p_{\text{fwd}} > \tau$ \textbf{and} $p_{\text{bwd}} > \tau$ \textbf{then} 
        \State \quad \quad \quad Add edge $(i, j)$ to $E$. \textcolor{gray}{// Connect semantically equivalent answers}
        \State \quad \quad \textbf{end if}
        \State \quad \textbf{end for}
        
        \State \quad \textcolor{gray}{// 3. Estimate Belief Distribution over Semantic Classes}
        \State \quad Extract connected components as semantic classes $\mathcal{C} = \{c_1, \dots, c_K\}$.
        \State \quad \textbf{for} each class $c_k \in \mathcal{C}$ \textbf{do}
        \State \quad \quad $p(c_k \mid Z) \gets \sum_{s \in c_k} p_\theta(s \mid Z)$.
        \State \quad \textbf{end for}
        
        \State \quad \textcolor{gray}{// 4. Identify Ground-Truth Consistent Class}
        \State \quad Find correct class $c^\star \in \mathcal{C}$ such that $\exists s \in c^\star, s \leftrightarrow y^\star$ (via bi-NLI).
        \State \quad \textbf{return} $p(c^\star \mid Z)$.
        \State \textbf{End Function}
        
        \State \textcolor{gray}{// Phase 1: Estimate Prior Belief (Conditioned on Question only)}
        \State Construct prior context $B(x) \gets \text{fact-free paraphrase of } x$.
        \State $p(c^\star \mid B(x)) \gets \textsc{EstimateBelief}(B(x), y^\star)$.
        
        \State \textcolor{gray}{// Phase 2: Estimate Posterior Belief (Conditioned on Question + Evidence)}
        \State Construct posterior context $C_t(x) \gets x \oplus e_t$.
        \State $p(c^\star \mid C_t(x)) \gets \textsc{EstimateBelief}(C_t(x), y^\star)$.
        
        \State \textcolor{gray}{// Phase 3: Compute Information Gain as Uncertainty Reduction}
        \State $\widehat{\text{IG}}_t \gets \log p(c^\star \mid C_t(x)) - \log p(c^\star \mid B(x))$.
        \State \Return $\widehat{\text{IG}}_t$.
    \end{algorithmic}
\end{algorithm}

\begin{algorithm}[t]
    \caption{InfoReasoner Training Algorithm}
    \label{alg:training_framework}
    \begin{algorithmic}[1]
        \State \textbf{Input:} Dataset $\mathcal{D}$, initial policy $\pi_\theta$, reference policy $\pi_{\text{ref}}$, max turns $T$, group size $G$.
        \For{each training iteration}
            \State Sample a question $x$ and ground truth $y^\star$ from $\mathcal{D}$.
            
            \State \textcolor{gray}{// Phase 1: Group Rollout (Sample $G$ diverse trajectories)}
            \For{$i = 1$ to $G$}
                \State Initialize state $s_{0,i} = \{x\}$.
                \State Initialize trajectory history $\mathcal{H}_i = \emptyset$.
                \For{$t = 1$ to $T$}
                    \State Generate thought $h_{t,i}$ and action $a_{t,i} \sim \pi_\theta(\cdot \mid s_{t-1,i})$.
                    
                    \If{$a_{t,i}$ is \texttt{<search>} query}
                        \State Retrieve top-$k$ documents $e_{t,i} \leftarrow \text{Env}(a_{t,i})$.
                        \State Update state $s_{t,i} \leftarrow s_{t-1,i} \oplus h_{t,i} \oplus a_{t,i} \oplus e_{t,i}$.
                        \State Record step $\mathcal{H}_i \leftarrow \mathcal{H}_i \cup \{(x, a_{t,i}, e_{t,i})\}$.
                    \ElsIf{$a_{t,i}$ is \texttt{<answer>}}
                        \State Extract predicted answer $\hat{y}_i$.
                        \State Record terminal answer $\hat{y}_i$.
                        \State \textbf{break}
                    \EndIf
                \EndFor
                
                \State \textcolor{gray}{// Phase 2: Post-hoc Reward Computation}
                \State Compute outcome reward $r^{\text{out}} \leftarrow \mathbb{I}(\hat{y}_i = y^\star)$.
                \State Initialize traversal rewards $\mathcal{R}^{\text{int}}_i = \emptyset$.
                \For{each step $(x, a, e) \in \mathcal{H}_i$}
                    \State Compute intrinsic reward $r \leftarrow \text{Algorithm 1}(x, e, \dots)$.
                    \State $\mathcal{R}^{\text{int}}_i \leftarrow \mathcal{R}^{\text{int}}_i \cup \{r\}$.
                \EndFor
                \State Calculate total reward: $R_i = r^{\text{out}} + \lambda \cdot \frac{1}{|\mathcal{R}^{\text{int}}_i|} \sum_{r \in \mathcal{R}^{\text{int}}_i} r$.
            \EndFor
            
            \State \textcolor{gray}{// Phase 3: Group Relative Policy Optimization}
            \State Compute group statistics: $\mu_R = \frac{1}{G}\sum R_i$, $\sigma_R = \sqrt{\frac{1}{G}\sum (R_i - \mu_R)^2}$.
            \State Compute advantages: $A_i = \frac{R_i - \mu_R}{\sigma_R + \epsilon}$.
            
            \State Update $\pi_\theta$ to maximize GRPO objective:
            \State $\mathcal{L}(\theta) = \frac{1}{G} \sum_{i=1}^G \left[ \min\left( \frac{\pi_\theta(o_i \mid x)}{\pi_{\text{old}}(o_i \mid x)} A_i, \text{clip}\left(\frac{\pi_\theta(o_i \mid x)}{\pi_{\text{old}}(o_i \mid x)}, 1-\epsilon, 1+\epsilon\right) A_i \right) - \beta \mathbb{D}_{\text{KL}}(\pi_\theta \| \pi_{\text{ref}}) \right]$.
        \EndFor
    \end{algorithmic}
\end{algorithm}
\subsection{Proof of Proposition \ref{prop:monotonicity-wrt-information-channels} (Monotonicity)}
\label{prop:monotonicity-wrt-information-channels-proof}
This proof ensures that our framework rationally prefers more informative sources (channels) over less informative ones, consistent with the theory of Blackwell dominance.
\begin{proof}
    Let $a_1$ and $a_2$ be two information-gathering actions inducing observation channels. If $a_1$ Blackwell-dominates $a_2$, there exists a stochastic kernel $P(O_2 \mid O_1)$ such that the observation $O_2$ can be simulated by post-processing $O_1$.
    
    When $\mathcal{U}$ is an $f$-divergence-based functional like Shannon entropy, the expected information gain is equivalent to the mutual information $I(Y; O \mid a, b)$. By the \textit{Data Processing Inequality} for mutual information:
    \begin{equation}
    I(Y; O_1 \mid a_1, b) \ge I(Y; O_2 \mid a_2, b),
    \end{equation}
    which directly implies:
    \begin{equation}
    \mathrm{EIG}(a_1 \mid b) \ge \mathrm{EIG}(a_2 \mid b), \quad \forall b.
    \end{equation}
    For general uncertainty functionals satisfying the axioms, this monotonicity holds because any information reachable via $a_2$ is already contained (potentially with less noise) within the channel of $a_1$.
\end{proof}
This property guarantees that the agent will systematically favor higher-quality information sources, such as more reliable knowledge bases or more precise reasoning tools, when multiple options are available.

\section{Algorithm Details}
\label{sec:algorithm_details}

\subsection{Information Gain Reward Computation}
\label{subsec:reward_computation}

Algorithm~\ref{alg:reward_computation} details the step-by-step procedure for computing the information gain (IG) reward, which serves as dense intermediate supervision for the policy to seek informative evidence. The core challenge is estimating the model's epistemic uncertainty  during the reasoning process.

We address this by measuring {the reduction in semantic entropy before and after incorporating new evidence.} As shown in Steps 1 and 2, the model samples a group of $M$ candidate responses for the current reasoning path. These responses are then clustered into semantic meaning classes $\{c_k\}$ to filter out surface-level linguistic variations and focus on core semantic uncertainty. Here we leverage DeBERTa-large model \cite{he2021debertadecodingenhancedbertdisentangled} that is fine-tuned on the NLI dataset MNLI \cite{williams-etal-2018-broad} as NLI model $P_{\text{NLI}}$. 
{By computing the entropy of the resulting belief distribution, we quantify how confused the model is about the final answer. The information gain is then defined as the difference between the prior and posterior entropies. A positive value indicates that the retrieved evidence successfully concentrated the model's belief on a specific answer, thereby rewarding the informativeness of the retrieval action.} 

As discussed in Section~\ref{sec:implementation_details}, we adopt $M=12$ to strike a balance between estimation variance and computational efficiency. In practice, Step 1 (prior entropy) can often be pre-calculated or reused from the previous step's posterior entropy to further reduce overhead.

\subsection{InfoReasoner Training Algorithm}
\label{subsec:training_framework}

Algorithm~\ref{alg:training_framework} summarizes the complete training and reasoning architecture of InfoReasoner. The framework integrates an agentic iterative retrieval loop with the Group Relative Policy Optimization (GRPO) algorithm to optimize the retrieval and reasoning policy.

The training process follows a ``sample-then-optimize'' paradigm. For each question, we sample a group of $G$ independent trajectories (trajectories are sampled in parallel to enable the group-relative advantage estimation characteristic of GRPO). Each trajectory consists of multiple ``Search-and-Reason'' turns, where the model decides whether to generate a search query or proceed to the final answer based on its current state. During each turn, if a search query is generated, the environment returns the top-$k$ relevant documents, and the model receives an intermediate IG reward (computed via Algorithm~\ref{alg:reward_computation}) that quantifies the value of the newly acquired information.

After completing the reasoning process (either by reaching a terminal state or the maximum turn limit $T$), we compute a composite reward $R_i$ for each trajectory. This reward combines the sparse outcome-based Exact Match (EM) score with the cumulative intrinsic IG rewards, balanced by the coefficient $\lambda$. Finally, the policy is updated by comparing each trajectory's reward against the group mean, effectively encouraging the model to favor retrieval strategies that are more ``efficient'' and ``informative'' than its current average performance.

\begin{table}[t]
    \centering
    \caption{Key hyperparameters used for InfoReasoner training. Unless otherwise specified, these apply to both 3B and 7B models.}
    \begin{tabular}{l l}
        \toprule
        \textbf{Hyperparameter} & \textbf{Value} \\
        \midrule
        Training Batch Size & 512\\
        Testing Batch Size & 256\\
        Max Observation Length & 500\\
        Micro Training Batch Size & 64 (3B), 32 (7B)\\
        Actor Learning Rate          & $1 \times 10^{-6}$ \\
         Training Steps     & 300 \\
        Number of Maximum Search Turns (Training) &  2\\
        KL Loss Coefficient & 0.001\\
        Rollout Group Size & 3\\
        Maximum Retrieval Documents & 3\\
        Tensor Model Parallel Size & 1 (3B), 4 (7B)\\
        Group Size on Reward Estimation & 12\\
        Information Gain Coefficient & 0.6\\
        \bottomrule
    \end{tabular}
    \label{tab:training_hyperparameters}
\end{table}

\begin{figure}[t]
    \centering
    \begin{tcolorbox}[
        colback=c1!5,
        colframe=c1!60,
        title=\textbf{System Prompt},
        fonttitle=\color{white},
        boxrule=1pt,
        arc=4pt,
        left=6pt,
        right=6pt,
        top=6pt,
        bottom=6pt,
        enhanced,
        shadow={2pt}{-2pt}{0pt}{gray!30}
    ]
    \footnotesize
    Answer the given question. You must conduct reasoning inside \texttt{<think>} and \texttt{</think>} first every time you get new information. After reasoning, if you find you lack some knowledge, you can call a search engine by \texttt{<search>} query \texttt{</search>}, and it will return the top searched results between \texttt{<information>} and \texttt{</information>}. You can search as many times as you want. If you find no further external knowledge needed, you can directly provide the answer inside \texttt{<answer>} and \texttt{</answer>} without detailed illustrations. For example, \texttt{<answer>} xxx \texttt{</answer>}. Question: \{question\}.
    \end{tcolorbox}
    \caption{System prompt used for training and inference, guiding the model to perform iterative reasoning and retrieval.}
    \label{fig:system_prompt}
    \vspace{-1em}
\end{figure}
\begin{figure}[t]
    \centering
    \begin{tcolorbox}[
        colback=c1!5,
        colframe=c1!60,
        title=\textbf{Error Handling Prompt},
        fonttitle=\bfseries\color{white},
        boxrule=1pt,
        arc=4pt,
        left=6pt,
        right=6pt,
        top=6pt,
        bottom=6pt,
        enhanced,
        shadow={2pt}{-2pt}{0pt}{gray!30}
    ]
    \footnotesize
    My previous action is invalid. If I want to search, I should put the query between \texttt{<search>} and \texttt{</search>}. If I want to give the final answer, I should put the answer between \texttt{<answer>} and \texttt{</answer>}. Let me try again.
    \end{tcolorbox}
    \caption{Error handling prompt used to correct invalid model actions and guide proper format compliance.}
    \label{fig:error_handling_prompt}
\end{figure}
\begin{figure}[t]
    \centering
    \begin{tcolorbox}[
        colback=c1!5,
        colframe=c1!60,
        title=\textbf{Prompts for Information Gain Estimation},
        fonttitle=\bfseries\color{white},
        boxrule=1pt,
        arc=4pt,
        left=6pt,
        right=6pt,
        top=6pt,
        bottom=6pt,
        enhanced,
        shadow={2pt}{-2pt}{0pt}{gray!30}
    ]
    \footnotesize
    \textbf{Prompt for Naive Generation}
    \newline
    Answer the question based on your own knowledge. Only give me the answer and do not output any other words.\\
    Question: \{question\}

    \vspace{1.5em}
    \textbf{Prompt for Generation with Retrieved Documents}
    \newline
    Answer the question based on the given document. Only give me the answer and do not output any other words.\\
    The following are given documents. \{documents\}\\
    Question: \{question\}
    \end{tcolorbox}
    \caption{Prompts used for sampling candidate responses to estimate semantic belief distributions for information gain computation.}
    \label{fig:ig_estimation_prompts}
    \vspace{-1em}
\end{figure}

\begin{table}[htbp]
    \centering
    \caption{Inference efficiency comparison: number of search turns required to match or surpass the peak performance of the Search-R1 baseline.}
    \label{tab:inference_efficiency}
    \small
    \begin{tabular}{llcccc}
    \toprule
    \textbf{Dataset} & \textbf{Method} & \textbf{EM @ $T=2$} & \textbf{Baseline Peak EM} & \textbf{Turns to Match Peak} \\
    \midrule
    \multirow{2}{*}{HotpotQA} & Search-R1 & 32.4\% & 34.6\% ($T=6$) & 6.0 \\
    & \textbf{InfoReasoner} & \textbf{41.4\%} & - & \textbf{2.0 ($\downarrow$ 66.7\%)} \\
    \midrule
    \multirow{2}{*}{2Wiki} & Search-R1 & 19.4\% & 30.1\% ($T=4$) & 4.0 \\
    & \textbf{InfoReasoner} & \textbf{30.2\%} & - & \textbf{2.0 ($\downarrow$ 50.0\%)} \\
    \bottomrule
    \end{tabular}
\end{table}
\begin{figure}[t]
    \centering
    \begin{subfigure}[b]{0.3\linewidth}
        \centering
        \includegraphics[width=\textwidth]{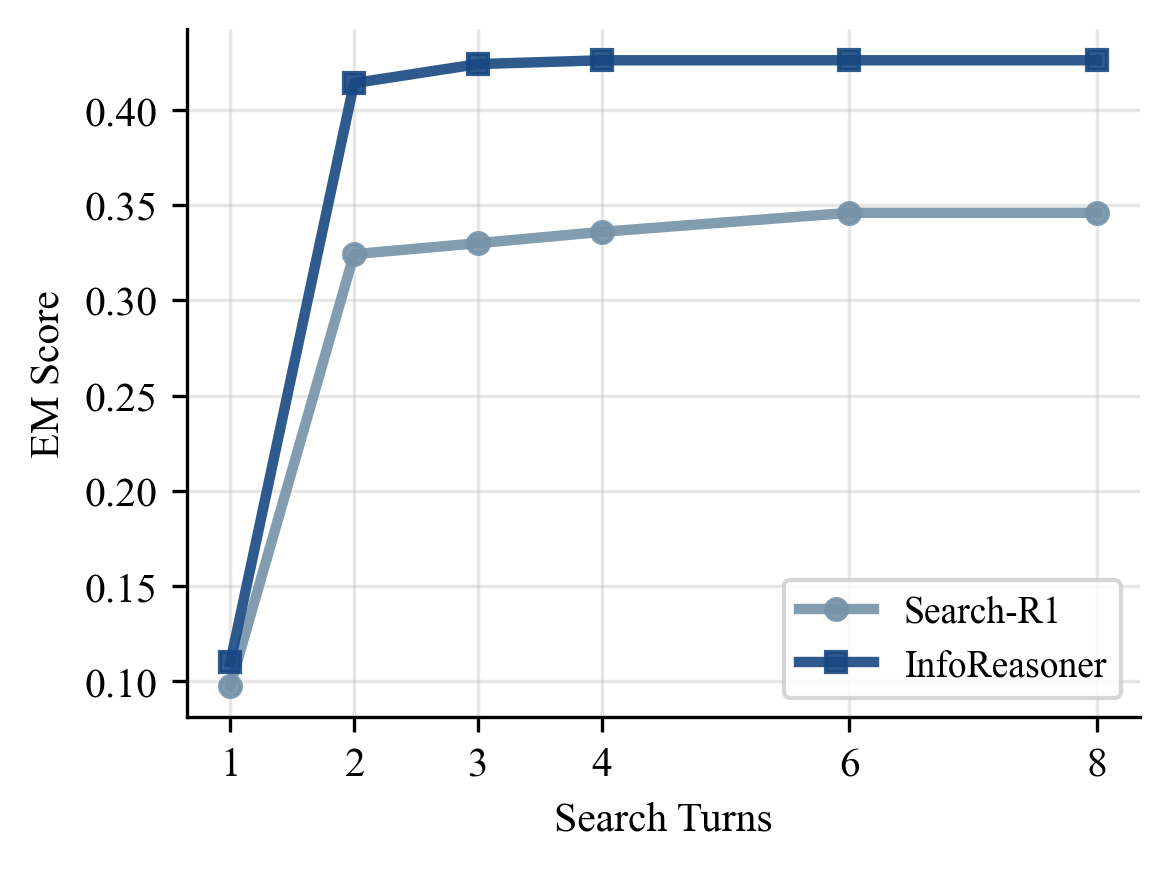}
        \caption{HotpotQA}
        \label{fig:search_turns_hotpotqa}
    \end{subfigure}
    \begin{subfigure}[b]{0.3\linewidth}
        \centering
        \includegraphics[width=\textwidth]{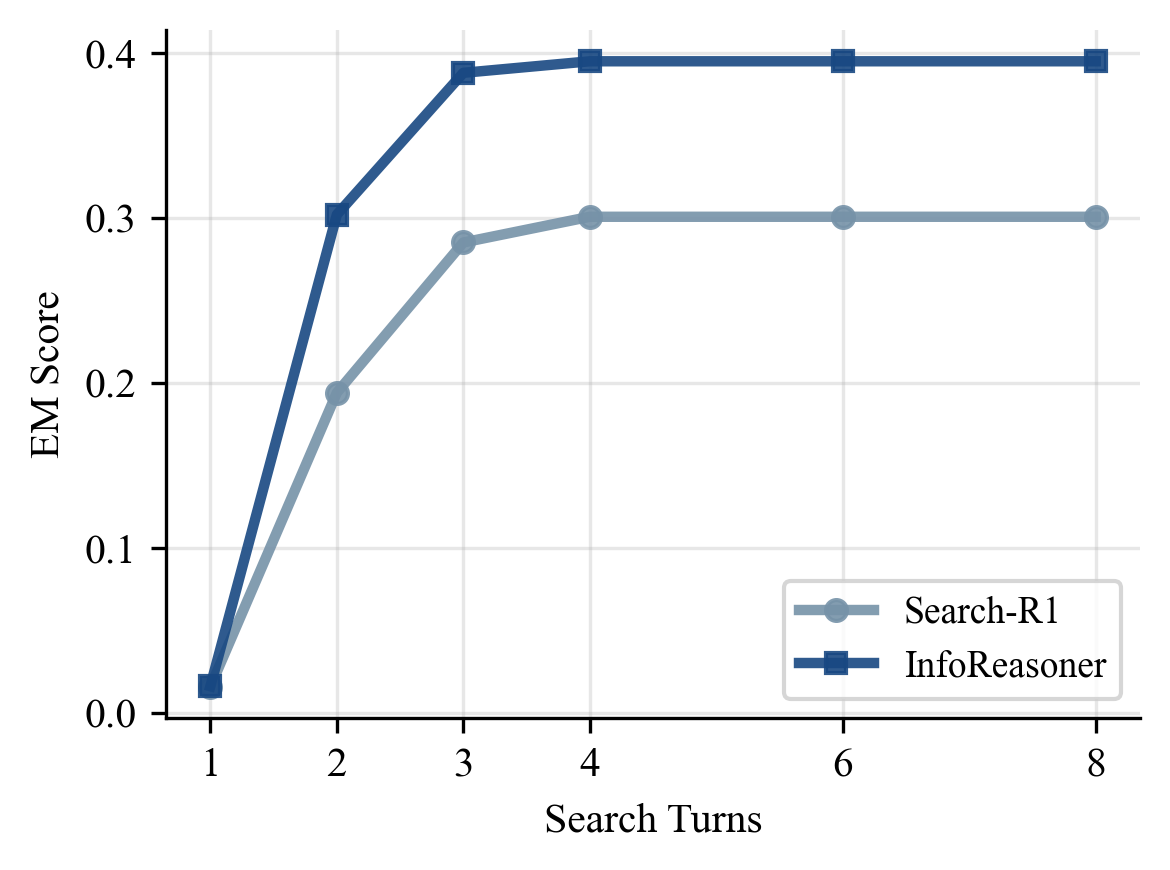}
        \caption{2Wiki}
        \label{fig:search_turns_2wiki}
    \end{subfigure}
    \caption{Search turns analysis for InfoReasoner and Search-R1 in HotpotQA and 2Wiki multi-hop QA datasets.}
    \label{fig:search_turns_comparison}
    \vspace{-1em}
\end{figure}

\section{Implementation Details}
\label{sec:implementation_details}

\subsection{Training Details}
\label{subsec:training_details}

InfoReasoner was trained on 8 NVIDIA A800 GPUs using full-parameter fine-tuning. The training dataset was created by merging NQ and HotpotQA, and this same dataset was used for both InfoReasoner and all training-based baselines. Distributed training was conducted with Fully Sharded Data Parallelism (FSDP), and BFloat16 precision was employed throughout both training and evaluation. We leverage VeRL \cite{10.1145/3689031.3696075} framework for training and the hyperparameters are set as Table~\ref{tab:training_hyperparameters}. The main QA experiments use Qwen2.5-3B and Qwen2.5-7B backbones. The MATH500 tool-integrated experiment uses Qwen2.5-7B, and the WebDetective experiment uses the 3B-scale InfoReasoner policy.
For efficient retrieval, we deploy a local search engine on 4 NVIDIA A800 GPUs. We use the E5 encoder \cite{wang2024textembeddingsweaklysupervisedcontrastive} and the Wikipedia dump from FlashRAG \cite{10.1145/3701716.3715313} as the retrieval backend for open-domain and multi-hop QA tasks. For complex, real-time web-based tasks, we cache Google Search results and scrape the full content of retrieved pages. Retrieval over this corpus is performed using the BGE-M3 retriever \cite{chen-etal-2024-m3}.

\subsection{Prompt Details}
\label{subsec:prompt_details}

In this section, we provide the specific prompts used throughout the training and evaluation of InfoReasoner. These prompts are designed to maintain structural consistency and ensure the model effectively leverages the agentic search capabilities.

\textit{System Prompt.} The prompt shown in Figure~\ref{fig:system_prompt} is the primary instruction provided to the model during both the RL training and inference phases. It guides the model to conduct iterative reasoning within \texttt{<think>} tags and execute retrieval actions using \texttt{<search>} tags.

\textit{Error Handling Prompt.} To maintain the stability of the agentic loop, we employ a dedicated error handling prompt. When the model generates a response that does not follow the predefined \texttt{<search>} or \texttt{<answer>} formats, this prompt is appended to the context to correct the model and re-induce the valid action format. The prompt is shown in Figure~\ref{fig:error_handling_prompt}.

\textit{Information Gain Estimation Prompts.} To estimate the belief distribution for information gain computation, we sample candidate responses using specific prompts for both naive (prior) and retrieval-augmented (posterior) contexts. The prompts are shown in Figure~\ref{fig:ig_estimation_prompts}.

\begin{figure*}[t]
    \centering
    \begin{tcolorbox}[
        colback=c1!5,
        colframe=c1!60,
        title=\textbf{Case Study: Misleading Retrieval},
        fonttitle=\bfseries\color{white},
        boxrule=1pt,
        arc=4pt,
        left=6pt,
        right=6pt,
        top=6pt,
        bottom=6pt,
        enhanced,
        shadow={2pt}{-2pt}{0pt}{gray!30}
    ]
    \footnotesize
    \textbf{Query:} Are Instinct and Popular Mechanics both magazines? \\
    \textbf{Ground-truth:} Yes \\
    \textbf{Reasoning Process:} \\
    \texttt{<think>} I need to determine if Instinct and Popular Mechanics are both magazines. I'll start by searching for information about Instinct and Popular Mechanics. \texttt{</think>} \\
    \texttt{<search>} "Instinct" "Popular Mechanics" \texttt{</search>} \\
    \texttt{<information>} 
    \\ Doc 1 (Title: ``Instinct (song)'') Instinct (song) ``Instinct'' is a 1996 song by rock group Crowded House. It was the first single released from the group's greatest hits compilation ``Recurring Dream'' in the United Kingdom, and the third and final release in Australia. It was a top 20 hit in New Zealand and the UK, peaking at number 17 and number 12, respectively; in the UK, it became their second-highest peak alongside ``Locked Out'' (their highest peak was ``Weather with You'' at number seven). In Australia, ``Instinct'' peaked at number 90 on the ARIA Singles Chart in March 1997, spending two non-consecutive weeks in the ...
    \\ Doc 2 (Title: ``Instinctive drift'') automatic and unplanned, a natural reaction which often is preferred by the animal over learned and unnatural actions. This topic is relevant and discussed when it comes to the nature and nurture controversy. Many experimental studies have been conducted on this topic which proves its reliability. Natural behaviours are easier to learn and often easier expressed, even subliminally. The term instinctive drift was coined by Keller and Marian Breland, former students of B.F. Skinner at the University of Minnesota (Marian in 1938, Keller in 1940). B.F. Skinner coined the term operant conditioning. The Breland's followed Skinners principles of control and ...
    \\ Doc 3 (Title: ``Popular Mechanics'') Popular Mechanics Popular Mechanics is a magazine of popular science and technology. Popular Mechanics was first published by Henry Haven Windsor, January 11, 1902. It has been owned since 1958 by Hearst Communications. There are nine international editions, including a now-defunct Latin American version that had been published for decades, and a newer South African edition. The Russian edition of ``Popular Mechanics'' has been issued since 2002. In 2013, the US edition changed from twelve to ten issues per year. ``Popular Mechanics'' features regular sections on automotive, home, outdoors, science, and technology topics. A recurring column is ``Jay Leno's Garage'' \\\texttt{</information>} \\
    \texttt{<think>} After searching for information about Instinct and Popular Mechanics, it appears that Instinct is a song by Crowded House, while Popular Mechanics is a magazine of popular science and technology. Since they have different forms of media and content, Instinct and Popular Mechanics are not both magazines. \texttt{</think>} \\
    \texttt{<answer>} No \texttt{</answer>}\\

    \textbf{Information Gain Score:} -0.915 
    \end{tcolorbox}
    \caption{Case study on misleading retrieval. When the retriever surfaces irrelevant information (e.g., a song instead of the target magazine), InfoReasoner assigns a negative information gain reward (-0.915). This correctly identifies the increase in epistemic uncertainty and provides a vital corrective signal for policy optimization.}

    \label{fig:failure_case}
    \vspace{-1em}
\end{figure*}
\begin{figure*}[t]
    \centering
    \begin{tcolorbox}[
        colback=c1!5,
        colframe=c1!60,
        title=\textbf{Case Study: High Information Gain with Reasoning Failure},
        fonttitle=\bfseries\color{white},
        boxrule=1pt,
        arc=4pt,
        left=6pt,
        right=6pt,
        top=6pt,
        bottom=6pt,
        enhanced,
        shadow={2pt}{-2pt}{0pt}{gray!30}
    ]
    \footnotesize
    \textbf{Query:} who plays the hitman in the hitman's bodyguard? \\
    \textbf{Ground-truth:} Samuel L. Jackson \\
    \textbf{Reasoning Process:} \\
    \texttt{<think>} To find out who plays the role of the hitman in the movie ``The Hitman's Bodyguard,'' I need to search for the cast of this movie. \texttt{</think>} \\
    \texttt{<search>} who plays the hitman in the hitman's bodyguard \texttt{</search>} \\
    \texttt{<information>} 
    \\Doc 1 (Title: ``The Hitman's Bodyguard'') The Hitman's Bodyguard The Hitman's Bodyguard is a 2017 American action thriller comedy film directed by Patrick Hughes and written by Tom O'Connor, whose script was on the 2011 Black List survey. The film stars in main roles Ryan Reynolds and Samuel L. Jackson which are side by side from Gary Oldman and Salma Hayek, and follows a bodyguard named Michael Bryce who is hired to protect a popular hitman named Darius Kincaid who has to testify at the International Criminal Court. ``The Hitman's Bodyguard'' was released in the United States on August 18, 2017, and grossed \$176 million worldwide...
    \\Doc 2 (Title: ``The Hitman's Bodyguard'') was announced that Jackson, Reynolds and Hayek were in early talks to reprise their roles for a sequel, titled ``The Hitman's Wife's Bodyguard'', with plans to begin filming later in the year. The Hitman's Bodyguard The Hitman's Bodyguard is a 2017 American action thriller comedy film directed by Patrick Hughes and written by Tom O'Connor, whose script was on the 2011 Black List survey. The film stars in main roles Ryan Reynolds and Samuel L. Jackson which are side by side from Gary Oldman and Salma Hayek, and follows a bodyguard named Michael Bryce who is hired to protect a ...
    \\Doc 3 (Title: ``The Hitman's Bodyguard'') The movie ends with them dancing and then kissing in the midst of all the fighting and chaos taking place at the bar. In May 2011, David Ellison's Skydance Media acquired the action script ``The Hitman's Bodyguard'' written by Tom O'Connor. The script was among the top 2011 Black List of unproduced screenplays. While originally intended as a drama, the script underwent a ``frantic'' two-week rewrite to be remade into a comedy several weeks prior to filming. On November 4, 2015, it was announced that Ryan Reynolds, Samuel L. Jackson and Gary Oldman were cast in the film, which Jeff ... \\ \texttt{</information>} \\
    \texttt{<think>} From the search results, I can see that the actor who plays the role of the hitman in ``The Hitman's Bodyguard'' is Gary Oldman. \texttt{</think>} \\
    \texttt{<answer>} Gary Oldman \texttt{</answer>} \\

    \textbf{Information Gain Score:} 0.808 
    \end{tcolorbox}
    \caption{Case study on high Information Gain despite reasoning failure. Even when the reasoning chain concludes incorrectly, the positive information gain reward (0.808) correctly identifies and rewards the successful discovery of ground-truth evidence. This illustrates how InfoReasoner provides effective credit assignment for retrieval, decoupled from the final reasoning outcome.}
    \label{fig:reasoning_failure_case}
    \vspace{-1em}
\end{figure*}

\section{Additional Experimental Results}
\label{sec:additional_results}

This section provides extended analysis and ablation studies that complement our main findings.

\subsection{Training and Inference Efficiency}
\label{sec:efficiency_analysis}

In this section, we analyze the efficiency of InfoReasoner from both training and inference perspectives.

\textit{Training efficiency.} While the computation of the Information Gain (IG) reward introduces additional complexity during training due to the need for multiple samples and NLI checks, this overhead is primarily restricted to the training phase. Specifically, estimating semantic entropy using $M=12$ samples increases the training time per step compared to standard GRPO with pure outcome rewards. However, this is a one-time cost that yields a significantly more robust policy. \revadd{Using the non-IG portion as a proxy for outcome-only training, semantic IG introduces a $1.9\times$ training slowdown over 300 steps. The implied extra cost is 18.82 wall-clock hours, or 150.53 GPU-hours, and is incurred only during RL training.}

\begingroup
\begin{table}[t]
    \centering
    \caption{Training-time cost breakdown for InfoReasoner on 8$\times$A800 80GB GPUs.}
    \label{tab:training_cost_breakdown}
    \begin{tabular}{l c}
    \toprule
    Quantity & Value \\
    \midrule
    Number of steps & 300 \\
    Avg total time / step & 472.2 s \\
    Avg IG time / step & 225.8 s \\
    Avg non-IG time / step & 246.4 s \\
    Wall-clock & 39.35 h \\
    GPU-hours & 314.8 \\
    \bottomrule
    \end{tabular}
\end{table}
\endgroup

\textit{Inference efficiency.} Crucially, at inference time, InfoReasoner does not require any additional sampling or NLI checks. As shown in Table~\ref{tab:inference_efficiency}, the learned policy leads to significantly more efficient information seeking. On HotpotQA, InfoReasoner surpasses the peak performance of the Search-R1 baseline (34.6\%) after only 2 turns, representing a 66.7\% reduction in retrieval calls. Similarly, on 2Wiki, it achieves peak-matching performance with 50.0\% fewer turns. This efficiency stems from its ability to prioritize high-gain queries that resolve maximum uncertainty early in the reasoning process.

\subsection{Ablation Study on Search Turns}
\label{sec:ablation_study_on_search_turns}

We study how the number of search turns affects performance by varying $T$ from 1 to 8. Figure~\ref{fig:search_turns_comparison} shows EM scores for InfoReasoner and Search-R1 on HotpotQA and 2WikiMultihopQA.

\textit{Large gains from single-turn to two-turn retrieval.} Both methods improve substantially from $T=1$ to $T=2$. On HotpotQA, InfoReasoner improves from 11.0\% to 41.4\% EM (+30.4 points), while Search-R1 increases from 9.8\% to 32.4\% (+22.6 points). On 2WikiMultihopQA, InfoReasoner rises from 1.6\% to 30.2\% (+28.6 points), compared to Search-R1 from 1.6\% to 19.4\% (+17.8 points). This shows that multi-turn retrieval is essential for multi-hop reasoning.

\textit{Diminishing returns beyond 3-4 search turns.} Performance gains diminish after $T=2$. On HotpotQA, InfoReasoner plateaus at 42.6\% by $T=4$ (maintained at $T=6$ and $T=8$), while Search-R1 plateaus at 34.6\% by $T=6$. On 2WikiMultihopQA, both methods plateau by $T=4$ (InfoReasoner: 39.5\%; Search-R1: 30.1\%). InfoReasoner's earlier saturation at higher performance demonstrates its superior information-gain efficiency per query.

\textit{InfoReasoner outperforms Search-R1 across all settings.} At $T=2$, InfoReasoner leads by +9.0 EM on HotpotQA (41.4\% vs. 32.4\%) and +10.8 EM on 2WikiMultihopQA (30.2\% vs. 19.4\%). This advantage comes from maximizing expected information gain: InfoReasoner prioritizes evidence that reduces semantic uncertainty most, avoiding redundant queries and achieving higher accuracy with fewer turns.

\subsection{More Case Studies}
\label{sec:more_case_studies}

\textbf{Negative Information Gain as a Quality Signal.} Figure~\ref{fig:failure_case} demonstrates how our reward mechanism effectively handles misleading evidence. When the retriever surfaces information about a song instead of the target magazine, the framework assigns a \textit{negative information gain reward} ($-0.915$). This negative realized gain confirms that the estimator correctly detects the increase in epistemic uncertainty caused by irrelevant or ``poisoned'' context. Such cases validate that InfoReasoner provides a precise feedback signal, steering the policy away from deceptive reasoning paths and toward more reliable information-seeking strategies.

\textbf{Advantages of Information Gain Reward over Pure Outcome Reward.} Figure~\ref{fig:reasoning_failure_case} highlights a key advantage of our information gain metric: its ability to provide dense, meaningful credit assignment even when the final task fails. Here, the retrieval action successfully surfaced highly relevant documents identifying Samuel L. Jackson as the hitman (Darius Kincaid). Our framework assigns a high positive IG (0.808) because the evidence significantly reduced semantic uncertainty and concentrated the model's belief on the correct class. 
In contrast, a standard outcome-based reward (EM score) would assign a zero reward to the entire trajectory due to the final reasoning error. By rewarding correctly-intentioned and effective information gathering independently of final reasoning performance, InfoReasoner decouples the learning of \textit{retrieval strategies} from the learning of \textit{information synthesis}. This ensures that the agent is not unfairly penalized for a good search action that happened to be followed by a reasoning slip, leading to more stable and efficient policy optimization.

\section{Limitations and Future Work}
\label{sec:limitations}

A primary limitation of our framework lies in the computational overhead associated with the Information Gain reward during training. Deriving an accurate reward signal requires estimating the semantic entropy of the model's outputs, which involves sampling multiple candidate sequences (e.g., $M=12$) and performing pairwise NLI inferences to construct semantic equivalence classes. This process imposes a burden on training resources compared to standard outcome-based RL.

However, it is crucial to note that this overhead is strictly confined to the training phase. At inference time, InfoReasoner operates efficiently by simply executing the learned policy without any auxiliary sampling or NLI computation. Future work aims to mitigate the training cost by exploring lightweight approximations, such as estimating semantic uncertainty directly from the model's internal hidden states or replacing the expensive NLI model with calibrated embedding-based clustering methods. These advancements would further enhance the scalability of our approach for larger models and broader domains.


The development of InfoReasoner has broader implications for the deployment of Agentic Reasoning Models in real-world scenarios:

\textit{Advancing Trustworthy Agentic Search.} By grounding the model's motivation in ``uncertainty reduction," we move closer to AI agents that actively verify their knowledge rather than hallucinating answers. This is critical for high-stakes applications like medical or legal information seeking, where the reliability of the reasoning process is as important as the final answer.

\textit{Efficient Information Retrieval.} Unnecessary searching wastes computational resources and bandwidth. Our Information Gain theoretic validation encourages the model to search only when it lacks confidence, leading to more resource-efficient system designs that minimize API calls to external search engines.

\textit{Generalizability of Semantic Metrics.} The proposed framework for estimating semantic uncertainty and information gain is not limited to QA. It provides a generalized methodology for measuring the ``value of information" in any text-generation task, potentially benefiting fields such as automated scientific research, complex planning, and autonomous code generation.

%% file: MySec/related_work.tex
\section{Related Work}
\label{sec:related_work}

\subsection{Agentic Reasoning}
\label{sec:agentic_reasoning}

Large reasoning models (LRMs) aim to improve LLMs' test-time performance by incorporating extended reasoning steps. This approach differs from conventional large pre-trained models, which primarily scale during training by increasing model size or expanding the training dataset \cite{guo2025deepseekr1nature, zhong2025evaluationopenaio1opportunities}. 
Recent research indicates that scaling models at test time can enhance the reasoning capabilities of smaller models when tackling complex tasks. Notably, models such as DeepSeek-R1 \cite{guo2025deepseekr1nature} and OpenAI o1 \cite{zhong2025evaluationopenaio1opportunities} have been shown to explicitly employ chain-of-thought (CoT) reasoning \cite{weiChainofThoughtPromptingElicits}, effectively emulating human-like problem-solving strategies in areas like mathematics and coding.

In addition, some works start to integrate retrieval-augmented generation (RAG) and tools into LRMs' reasoning process, which means that the LRMs can retrieve external knowledge (knowledge base, web data, etc.) or tools such as calculators to assist in reasoning, thereby significantly enhancing the reasoning capabilities of the model. For example, 
Li \textit{et al.} \cite{liSearcho1AgenticSearchEnhanced2025} introduced Search-o1, a framework that augments LRMs with an agentic RAG mechanism and a Reason-in-Documents module to refine retrieved content. This approach incorporates an agentic search workflow into the reasoning process, allowing LRMs to dynamically retrieve external knowledge when faced with uncertain information. 
Jin \textit{et al.} \cite{jinSearchR1TrainingLLMs2025} introduced Search-R1, enabling a LRM to autonomously formulate multiple search queries throughout its step-by-step reasoning process, leveraging real-time retrieval to support each reasoning step.
Furthermore, Xiong \textit{et al.} \cite{xiongRAGGymSystematicOptimization2025} presented RAG-Gym, a unified framework that systematically optimizes agentic reasoning by combining sophisticated prompt engineering, actor adjustment, and critic training.
These works show that integrating RAG and tools into LRMs' reasoning process can significantly enhance the reasoning and complex problem-solving capabilities.

\subsection{Retrieval-Augmented Generation}
\label{sec:retrieval-augmented-generation}

Retrieval-Augmented Generation (RAG) has emerged as a promising paradigm to address the limitations of LLMs in factual consistency, knowledge coverage, and reasoning. 
It addresses the knowledge cut-off problem through a three-stage process \cite{liAgenticRAGDeep2025}. The first stage, \myemph{retrieval}, involves the LLM gathering relevant information from external sources \cite{xuCollabRAGBoostingRetrievalAugmented2025, zhangCrediblePlanDrivenRAG2025}. In the second stage, \myemph{integration}, the retrieved content is deduplicated, conflicts are resolved, and the information is re-ranked for relevance \cite{zhao-etal-2024-seer, chengDualRAGDualProcessApproach2025}. Finally, in the \myemph{generation} stage, the LLM reasons over the curated context to generate the final answer. For retrieval optimization, Xu \textit{et al.} \cite{xuCollabRAGBoostingRetrievalAugmented2025} proposed Collab-RAG to decompose complex queries to multiple simpler sub-queries, thereby enhancing the accuracy of the retrieval and facilitating more effective reasoning. 
Zhang \textit{et al.} \cite{zhangCrediblePlanDrivenRAG2025} introduced the PAR-RAG framework, which improves multi-step reasoning by selecting exemplars whose semantic complexity aligns with the current question, thereby enabling complexity-aware, top-down planning. For integration stage enhancement, there are two main approaches, relevance assessment and information synthesis. For example, Zhao \textit{et al.} \cite{zhao-etal-2024-seer} proposed SEER to extract evidence from retrieved passages to reduce computational costs and enhance the final RAG performance, and 
Cheng \textit{et al.} \cite{chengDualRAGDualProcessApproach2025} proposed a dual-process approach that integrates reasoning-augmented querying with progressive knowledge aggregation, enabling the filtering and structuring of retrieved information into a continuously evolving outline. For generation stage enhancement, context-aware synthesis and grounded generation control are leveraged. For example, Islam \textit{et al.} \cite{islamOpenRAGEnhancedRetrieval2024} proposed OpenRAG to dynamically select knowledge modules to ensure outputs remain relevant while reducing noise, while AlignRAG \cite{wei2026retrieval} leveraged critique-guided alignment to refine the generated path. However, our method is different from these methods in that we propose a synthetic semantic information gain reward to end-to-end guide the RAG and reasoning process.

\subsection{Reinforcement Learning for LLM Reasoning}
\label{sec:reinforcement-learning-for-llm-reasoning}

Reinforcement learning (RL) has become an influential approach for improving the reasoning capabilities of LLMs. The application of RL to LLMs began with Reinforcement Learning from Human Feedback (RLHF) \cite{ouyang2022traininglanguagemodelsfollow}, which fine-tunes models to better reflect human preferences by leveraging feedback from human annotators. Foundational methods such as Proximal Policy Optimization (PPO) \cite{schulmanProximalPolicyOptimization2017} introduced robust policy optimization techniques using clipped objectives and reward normalization. More recently, methods like Direct Preference Optimization (DPO) \cite{rafailovDirectPreferenceOptimization} have further streamlined the alignment process by directly optimizing on preference data, eliminating the need for explicit reward modeling.
Group Relative Policy Optimization (GRPO) \cite{shaoDeepSeekMathPushingLimits2024} marks a notable step forward in enhancing reasoning within RL frameworks by overcoming several shortcomings of earlier methods. Unlike traditional approaches that require a value function, GRPO estimates baselines using group scores, which greatly reduces the computational resources needed for training. This technique has been successfully applied in models such as DeepSeekMath \cite{shaoDeepSeekMathPushingLimits2024} and DeepSeek-R1 \cite{guo2025deepseekr1nature}, leading to improved mathematical reasoning performance. 
Moreover, GRPO and its derivatives have seen extensive application in agentic reasoning, where they are employed to optimize tool utilization, RAG, and reasoning trajectories, thereby further boosting the reasoning performance of LLMs. Notable examples include their use in Search-o1 \cite{liSearcho1AgenticSearchEnhanced2025}, Search-R1 \cite{jinSearchR1TrainingLLMs2025}, RAG-Gym \cite{xiongRAGGymSystematicOptimization2025}, and RAG-R1 \cite{tanRAGR1IncentivizeSearch2025}.
\revadd{Recent concurrent work also explores information-theoretic or semantic-uncertainty objectives for reasoning optimization. InfoReasoner differs from IGPO \cite{wang2025informationgainbasedpolicyoptimization} in that our reward is computed in semantic class space and measures the retrieval-conditioned increase in probability assigned to the gold semantic class, rather than directly rewarding answer-probability changes at the surface form level. It also differs from EMPO \cite{zhang2025rightquestionalreadyhalfanswer}, which optimizes general reasoning through unsupervised semantic entropy minimization. InfoReasoner is retrieval-specific and uses a gold-class reward to provide dense credit for intermediate retrieval actions while remaining aligned with final-answer correctness.}